\documentclass[journal,twocolumn]{IEEEtran}

\ifx\pdfoutput\undefined
\usepackage{graphicx}
\graphicspath{{eps_figs/}}
\else
\usepackage[pdftex]{graphicx}
\graphicspath{{pdf_figs/}}
\fi

\usepackage{latexsym,amssymb,amsmath,ifthen,afterpage,cite,amsfonts}
\usepackage{mathtools}
\usepackage{lipsum}
\usepackage{subcaption}
\usepackage{algorithm}
\usepackage{algorithmic}
\usepackage{enumerate}
\usepackage{hyperref}
\usepackage{tikz} 
\usepackage{pgfplots}

\pgfplotsset{compat=1.17}

\definecolor{primalblue}{HTML}{1F77B4}
\definecolor{dualred}{HTML}{D62728}

\pgfplotsset{
  gibbsaxis/.style={
    width=\linewidth, height=0.62\linewidth,
    tick label style={font=\small}, label style={font=\small},
    legend style={font=\footnotesize, draw=none, fill opacity=0.85,
                  text opacity=1, cells={anchor=west}},
    grid=both, grid style={dotted, gray!40},
    every axis plot/.append style={line width=0.9pt},
  }
}
\usepackage{pgfplotstable}
\usepackage{filecontents}
\usepgfplotslibrary{statistics}
\usepgfplotslibrary{fillbetween}
\usepgfplotslibrary{groupplots}
\usepackage{booktabs}

\newcommand{\FT}{\mathcal{F}}

\newcommand{\cond}{\hspace{0.02em}|\hspace{0.08em}}

\newcommand{\R}{\mathbb{R}}

\newcommand{\Q}{{\mathbf{Q}}}
\newcommand{\A}{{\mathbf{A}}}
\newcommand{\B}{{\mathbf{B}}}
\newcommand{\Lap}{{\mathbf L}}
\newcommand{\I}{{\mathbf I}}

\newcommand{\E}{\mathcal{E}}
\newcommand{\V}{\mathcal{V}}

\newcommand{\X}{{\bf X}}
\newcommand{\x}{{\bf x}}
\newcommand{\Y}{{\bf Y}}
\newcommand{\y}{{\bf y}}
\newcommand{\z}{{\bf z}}

\newcounter{examplecntr}
\newenvironment{example}[1][]%
{\begin{trivlist}\small\item[]\refstepcounter{examplecntr}%
 {\bfseries Example~\theexamplecntr%
  \ifthenelse{\equal{#1}{}}{}{ (#1)}.
}}%
{\end{trivlist}}

\newcounter{propositioncntr}
\newenvironment{proposition}[1][]%
{\begin{trivlist}\item[]\refstepcounter{propositioncntr}%
{\bfseries Proposition~\thepropositioncntr%
  \ifthenelse{\equal{#1}{}}{}{ (#1)}.
}}%
{\hfill$\Box$\end{trivlist}}

\newcounter{theoremcntr}
{\begin{trivlist}\item[]\refstepcounter{theoremcntr}%
{\bfseries Theorem~\thetheoremcntr%
  \ifthenelse{\equal{#1}{}}{}{ (#1)}.
}}%
{\hfill$\Box$\end{trivlist}}

\newcommand{\pos}[2]{\makebox(0,0)[#1]{#2}}

\begin{document}
\DeclareGraphicsExtensions{.pdf}

\title{Accelerated Random-Sweep Gibbs Sampling for Gaussian Graphical Models via \\ Dual Normal Factor Graphs}

\author{Borna Khodabandeh and Mehdi Molkaraie
\thanks{B.~Khodabandeh is with the Department of Statistics, University of Oxford, Oxford OX1 3LB, UK. M.~Molkaraie is with the Department of Signal Theory and Communications, UPC, 
08034 Barcelona, Spain and with the 
Department of Statistical Sciences, University of Toronto, ON M5G 1X6, Canada (email: mehdi.molkaraie@alumni.epfl.ch). 
}
}

\maketitle 
 
\begin{abstract}
We study the convergence properties of the random-sweep Gibbs sampler 
for Gaussian graphical models with a thin-membrane prior. 
We demonstrate that the convergence rate of the Gibbs sampler is significantly accelerated in the dual model, which is obtained by applying the Fourier transform to 
the local factors of the normal factor graph representing the original model. In both domains, we derive the exact convergence rates for homogeneous $k$-regular graphs. 
We prove that, for all homogeneous models whose graphical representations contain cycles, the convergence rate in the dual 
domain is universal and independent of the underlying graph topology. Moreover, we show that the effective convergence rate in the dual domain is governed by the algebraic 
connectivity of the graph, providing an additional acceleration without increasing the computational complexity per sweep.
We further establish an explicit algebraic relation between the covariance structures of the primal and dual models, enabling marginal statistics of the primal model 
to be recovered directly from those of the dual model. Finally, numerical experiments on several graph families confirm our theoretical results and demonstrate substantial 
improvements in the convergence rates in various settings.

\end{abstract} 


\begin{IEEEkeywords}
Covariance matrix, duality, Fourier transform, Gaussian graphical models, Gaussian Markov random fields, Gibbs sampling, graph connectivity, normal factor graphs, 
precision matrix, random sweep, rate of convergence.
\end{IEEEkeywords}

\section{Introduction}
\label{sec:intro}

In this paper, we focus on sampling-based estimation and inference of statistical quantities for a Gaussian graphical model, defined on a finite, simple, and connected graph 
$G=(\V,\E)$ using Gibbs sampling. 

We generally assume that $G$ contains cycles (e.g., $G$ is a $k$-regular or a complete graph) which means $|\E| > |\V| - 1$. 
In fact, if a multivariate distribution has a tree structure, it is
possible to draw independent samples from the distribution and to compute the exact marginal densities efficiently via the sum-product (the belief propagation) algorithm \cite[Chapter 20]{murphy:2012}.  
However, for completeness, we will consider the star graph as a cycle-free example.

As a model, we focus on Gaussian graphical models with a thin-membrane prior defined in \eqref{eqn:PDFP}. The model is important in surface reconstruction 
(as a measure of smoothness \cite{winkler2012},  
\cite{terzopoulos2009regularization,terzopoulos2002computation}) and in vision problems \cite{szeliski1990bayesian, papandreou2010gaussian}. 
In our setting, the structure of the graph and the parameters of the model are assumed to be known. We aim to estimate statistical quantities, such as marginal 
variances, in large graphs. However, the exact computation of these quantities requires inverting the precision matrix with 
computational complexity $\mathcal{O}(|\V|^3)$, which is infeasible for large graphs. 
The Gibbs sampling algorithm exploits the sparsity of the precision matrix to estimate the desired quantities with complexity $\mathcal{O}(|\E|)$ per sweep.

We represent the model using normal factor graphs \cite{Forney:01}. The exact rate of convergence of the random-sweep Gibbs sampler is then derived for a 
class of graphs including
 $k$-regular and balanced complete bipartite graphs.

Next, we consider model transformations based on the
Fourier transform of the local factors of the normal factor graph. In the context of codes on graphs (e.g., LDPC codes) this transformed representation is referred to as the dual
normal factor graph~\cite{Forney:11}.

Our analytical results show that the dual model yields significantly accelerated convergence rates while maintaining virtually the same computational complexity, albeit
at the expense of increased memory usage. Notably, the rates in the dual domain hold for any homogeneous model with cycles; consequently, we establish
the universality of these rates across a 
broad class of graph topologies within our framework. The convergence rate in the dual domain can be further improved, where the gain is governed by the algebraic 
connectivity of $G$. 
Moreover, a direct algebraic relation between the covariance matrices of the primal and the dual models enables the direct computation of the primal marginal variances and the covariance structure 
from samples drawn in the dual domain (see Section~\ref{sec:DualModel}).  

In this paper, we focus exclusively on the baseline random-sweep Gibbs sampling algorithm. 
In both domains, faster convergence rates can, in theory, be achieved by employing blocked or collapsed Gibbs sampling~\cite[Chapter 6]{liu1995covariance}.
However, as these variations of the basic algorithm are applicable to both domains, the advantages of the dual domain carry over to blocked 
or collapsed Gibbs sampling.

We use bold uppercase letters to denote random vectors (e.g., $\X$) and bold lowercase letters for their 
realizations (e.g., $\x$). Matrices are also denoted by bold uppercase letters (e.g., $\A$), where the distinction between matrices and random vectors is clear from the context.

The paper is organized as follows. Section~\ref{sec:Model} introduces the model and derives its precision matrix in terms of the Laplacian of the graph.
The normal realization of the model is discussed in Section~\ref{sec:GM}.
The convergence rate of the random-sweep Gibbs sampler for 
$k$-regular, balanced complete bipartite, and star graphs is 
analyzed in Section~\ref{sec:GibbsP}. Section~\ref{sec:DualModel} introduces the corresponding dual model. We derive the exact rate of convergence of the Gibbs sampler in 
the dual domain in~Section~\ref{sec:GibbsDual}. 
Numerical experiments are presented in Section~\ref{sec:Num}, followed by directions for future research in Section~\ref{sec:Future}.

\section{The Model}
\label{sec:Model}

On $G=(\V,\E)$, we consider a homogeneous multivariate Gaussian distribution with 
the following PDF
\begin{equation}
\label{eqn:PDFP}
f_\X(\x) \propto \prod_{v \in \V}\!\textrm{exp}\Big(-\frac{x_v^2}{2s^2}\Big)\!\prod_{(u,v) \in \E}\!\!\!\textrm{exp}\Big(-\frac{(x_u - x_v)^2}{2\sigma^2}\Big)
\end{equation}
where $\x \in \R^{|\V|}$, and the positive real numbers $s^2$ and $\sigma^2$ are the variances associated with the vertices and the edges of $G$, respectively. 
We denote the precision matrix (i.e., the inverse of the covariance matrix) associated with the PDF in~(\ref{eqn:PDFP})
by the symmetric positive definite matrix $\Q \in \R^{|\V|\times |\V|}$. 
To denote the covariance matrix, we will use $\boldsymbol{\Sigma}$, $\text{Cov}(\X)$, and $\Q^{-1}$ interchangeably.

Next, we give an arbitrary orientation to $G$ by assigning a direction to each edge. E.g., we let $y_e = x_u - x_v$, where 
the edge $e$ is oriented from its head vertex $u$ to its tail vertex $v$. The oriented incidence matrix $\B$ of $G$ with respect to some given 
orientation is $|\V|\times |\E|$, such that $\B[u,e] = +1$, $\B[v,e] = -1$, and 
is 0 otherwise. The rank of $\B$ is $|\V| -1$. For more details, see~\cite[Chapter 9]{Stan2013}.

We denote the variables on the edges by $\Y$, where $\y \in \R^{|\E|}$. In this setting
\begin{equation}
\label{eqn:YfromX}
\Y = \mathbf{B}^\intercal\X
\end{equation} 
i.e., $\Y$ can be computed as a function of $\X$. We thus call $\Y$ auxiliary (or dependent) random variables~\cite{Forney:18}.

Accordingly, we define the following local factors
\begin{IEEEeqnarray}{rCl}
\phi(x_v) & =&  \textrm{exp}\Big(-\frac{x_v^2}{2s^2}\Big),  \quad v \in \V \label{eqn:PSIV1} \\[1mm]
\psi(y_e) & = & \textrm{exp}\Big(-\frac{y_e^2}{2\sigma^2}\Big), \quad e \in \E \label{eqn:PSIV2}
\end{IEEEeqnarray}

Therefore, the PDF in (\ref{eqn:PDFP}) can be expressed as
\begin{equation}
\label{eqn:PDFPr1} 
f_\X(\x) = \frac{1}{Z_f}\prod_{v \in \V}\phi(x_v)\prod_{e\in \E}\psi(y_e) 
\end{equation}
where $Z_f $ is the normalization constant. The marginal density over $X_1$ can be computed as
\begin{equation}
\label{eqn:MargP} 
f_{X_1}(x_1) = \int_{x_2}\int_{x_3}\ldots\int_{x_{|\V|}} f_\X(\x)dx_2dx_3\ldots dx_{|\V|}
\end{equation}

From (\ref{eqn:YfromX}) and (\ref{eqn:PDFPr1}), we obtain
\begin{IEEEeqnarray}{rCl}
f_\X(\x)  & \propto & \textrm{exp}\big(-\frac{1}{2s^2}\x^\intercal\x\big)\textrm{exp}\big(-\frac{1}{2\sigma^2}(\mathbf{B}^\intercal\x)^\intercal\mathbf{B}^\intercal\x\big) \notag \\[1mm]
              & = & \textrm{exp}\big(-\frac{1}{2}\x^\intercal(\frac{1}{s^2}\I + \frac{1}{\sigma^2}\B\B^\intercal)\x\big) \notag \\[1mm]
              & = & \textrm{exp}\big(-\frac{1}{2}\x^\intercal\Q\,\x\big) \label{eqn:PDFPr3}
\end{IEEEeqnarray}
We conclude that the precision matrix is 
\begin{IEEEeqnarray}{rCl}
\Q & = & \frac{1}{s^2}\I + \frac{1}{\sigma^2}\B\B^\intercal \label{eqn:QP1}\\[1mm]
     & = & \frac{1}{s^2}\I + \frac{1}{\sigma^2}\Lap \label{eqn:QP2}
\end{IEEEeqnarray}
where $\Lap = \B\B^\intercal$ is a $|\V|\times |\V|$ symmetric non-negative definite matrix known as the Laplacian of $G$. Note that in order to construct $\B$, we assigned 
arbitrary directions to the edges of $G$. However, all such assignments will give rise to the same
Laplacian matrix~\cite{Stan2013}.

The derivation can be easily extended to non-homogeneous models. Let $\{s_v^2\}_{v=1}^{|\V|}$ and $\{\sigma_e^2\}_{e=1}^{|\E|}$ denote the variances on the vertices and
 on the edges, respectively. In this case
\begin{equation}
\label{eqn:QPNH}
\Q = \mathbf{D}_{s}^{-1} + \B\mathbf{D}_{\sigma}^{-1}\B^\intercal
\end{equation}
where diagonal matrices $\mathbf{D}_{s} = \text{diag$(s_1^2, s_2^2, \ldots, s_{|\V|}^2)$}$ and $\mathbf{D}_{\sigma} = \text{diag$(\sigma_1^2, \sigma_2^2, \ldots, \sigma_{|\E|}^2)$}$.
In non-homogeneous models, the local factors are denoted by $\phi_v(\cdot)$ and $\psi_e(\cdot)$.


\section{Graphical Model of $f_\X(\cdot)$}
\label{sec:GM}

We can represent the factorization of a multivariate function, as in (\ref{eqn:PDFP}) and (\ref{eqn:PDFPr1}), by graphical models. 
In general, the structure of a Gaussian graphical model depends on its precision matrix. Indeed, nonzero entries of $\Q$ indicate the presence of factors in the model 
and off-diagonal zero entries of $\Q$ indicate the 
absence of interactions between
the corresponding variables. This is generally known as the Markov property of Gaussian Markov random fields~\cite{lauritzen1996graphical}. 

Fig.~\ref{fig:2DP1} shows the factor graph of (\ref{eqn:PDFP}) on a 2D lattice, where the filled circles (i.e., the variable nodes) represent 
the variables and the boxes (i.e., the factor nodes) represent the 
factors~\cite{KFL:01}.

In this paper, we consider graphical representations in terms of normal factor graphs, in which, vertices represent
the factors and edges represent the variables. The edge that represents $x_v$  is
connected to the vertex that represents $\phi(\cdot)$ if and only if $x_v$ is an argument of $\phi(\cdot)$.

The description of the normal factor graph seems to impose the restriction that variables should appear in (at most) two 
factors. We can lift this restriction by introducing equality indicator factors to replicate the variables that are 
involved in more than two factors, thereby maintaining the graph's normal structure~\cite{Forney:01, Lg:ifg2004}.

Accordingly, for a subset $\mathcal{S}$ of $\V$, let $\x_{\mathcal{S}} = (x_s, s \in \mathcal{S})$, and define an 
equality indicator factor as
\begin{equation}
\label{eqn:EQ} 
\delta_{=}(\x_{\mathcal{S}}) = 
\begin{cases}
1 & \text{if $x_1 = x_2 = \ldots = x_{|\mathcal{S}|}$}\\[1mm]
0 & \text{otherwise}
\end{cases}
\end{equation}
and a zero-sum indicator factor as
\begin{equation}
\label{eqn:ZSum} 
\delta_{+}(\x_{\mathcal{S}}) = 
\begin{cases}
1 & \text{if $x_1 + x_2 + \ldots + x_{|\mathcal{S}|} = 0$}\\[1mm]
0 & \text{otherwise.}
\end{cases}
\end{equation}

The details of normal factor graphs and the normalization procedure are explained in the following examples.

\begin{example} 
As in Fig.~\ref{fig:2DP1}, we consider a Gaussian Markov random field on a 2D lattice. 
Fig.~\ref{fig:2DP2} shows the normal factor graph of the factorization in~(\ref{eqn:PDFPr1}). The small empty boxes
represent (\ref{eqn:PSIV1}) and the big empty boxes represent (\ref{eqn:PSIV2}).

Equality indicator factors are used to replicate the variables (the edges), e.g., $X_1 = X'_1 = X^{''}_1$.
The symbol $``\circ"$ attached to zero-sum indicator factors is used for sign inversion. E.g., the zero-sum indicator factor in Fig.~\ref{fig:2DP2} imposes
the constraint that $Y_1 + X_1 - X_2 = 0$. It is clear that $\y$ is a function of $\x$.

In a 2D torus (i.e., a 2D lattice with periodic boundary conditions) all equality indicator factors will have degree four, $|\E| = 2|\V|$, and the precision matrix will have the following form
\begin{equation}
Q_{ij}= 
\begin{cases}
\dfrac{1}{s^2} + \dfrac{4}{\sigma^2}& \text{if $i = j$}\\[2mm]
-\dfrac{1}{\sigma^2} & \text{if $(i, j) \in \E$}\\[2mm]
0 & \text{otherwise.}
\end{cases}
\end{equation}
\end{example}

\begin{example}
We look at the factorization (\ref{eqn:PDFPr1} ) on a star graph $S_{|\V|}$. For $|\V| = 5$, The normal factor graph of the model
is illustrated in Fig.~\ref{fig:PStar}, in which the small empty boxes
represent (\ref{eqn:PSIV1}) and the big empty boxes represent (\ref{eqn:PSIV2}). The equality indicator factor in the centre of Fig.~\ref{fig:PStar} enforces the constraint that $X_1 = X'_1 = X^{''}_1 = X^{(3)}_1 = X^{(4)}_1$.

The precision matrix $\Q$ of a homogeneous star graph $S_{|\V|}$ is a symmetric arrowhead matrix\footnote{An arrowhead matrix is a square matrix with zeros in all entries except for 
the main diagonal, the first row, and the first column~\cite{o1990computing, stor2015accurate}.} with its first row equal to

\begin{equation}
\label{eqn:Star1} 
\big(\frac{1}{s^2} + \frac{|\E|}{\sigma^2}, -\frac{1}{\sigma^2}, -\frac{1}{\sigma^2}, \dots, -\frac{1}{\sigma^2}\big)
\end{equation}
and with diagonal entries equal to
\begin{equation}
\label{eqn:Star2} 
\big(\frac{1}{s^2} + \frac{|\E|}{\sigma^2}, \frac{1}{s^2} + \frac{1}{\sigma^2}, \frac{1}{s^2} + \frac{1}{\sigma^2}, \dots, \frac{1}{s^2} + \frac{1}{\sigma^2}\big)
\end{equation}
Here $|\E| = |\V| - 1$ because $S_{|\V|}$ is cycle-free. 
\end{example}

\begin{figure}[t]
\setlength{\unitlength}{0.88mm}
\centering
\begin{picture}(62,72)(-6,-6)
\small
\put(2,62){\circle*{4.0}}
\put(12,60){\framebox(4,4){$$}}
\put(16,62){\line(1,0){8}}
\put(26,62){\circle*{4.0}}
\put(28,62){\line(1,0){8}}         
\put(36,60){\framebox(4,4){$$}}
\put(40,62){\line(1,0){8}}
\put(50,62){\circle*{4.0}}
\put(0,50){\framebox(4,4){$$}}
\put(24,50){\framebox(4,4){$$}}
\put(48,50){\framebox(4,4){$$}}
\put(2,42){\circle*{4.0}}
\put(4,42){\line(1,0){8}}
\put(12,40){\framebox(4,4){$$}}
\put(16,42){\line(1,0){8}}
\put(26,42){\circle*{4.0}}
\put(28,42){\line(1,0){8}}
\put(36,40){\framebox(4,4){$$}}
\put(40,42){\line(1,0){8}}
\put(50,42){\circle*{4.0}}
\put(0,30){\framebox(4,4){$$}}
\put(24,30){\framebox(4,4){$$}}
\put(48,30){\framebox(4,4){$$}}
\put(2,22){\circle*{4.0}}
\put(4,22){\line(1,0){8}}
\put(12,20){\framebox(4,4){$$}}
\put(16,22){\line(1,0){8}}
\put(26,22){\circle*{4.0}}
\put(28,22){\line(1,0){8}}
\put(36,20){\framebox(4,4){$$}}
\put(40,22){\line(1,0){8}}
\put(50,22){\circle*{4.0}}
%
\put(4,62){\line(1,0){8}}        

\put(28,62){\line(1,0){8}}       

\put(4,42){\line(1,0){8}}

\put(28,42){\line(1,0){8}}

\put(4,22){\line(1,0){8}}

\put(28,22){\line(1,0){8}}

\put(2,54){\line(0,1){6}}
\put(2,50){\line(0,-1){6}}
\put(26,54){\line(0,1){6}}
\put(26,50){\line(0,-1){6}}
\put(50,54){\line(0,1){6}}
\put(50,50){\line(0,-1){6}}
\put(2,34){\line(0,1){6}}
\put(2.0,30){\line(0,-1){6}}
\put(26,34){\line(0,1){6}}
\put(26.0,30){\line(0,-1){6}}
\put(50,34){\line(0,1){6}}
\put(50,30){\line(0,-1){6}}

\put(3.5,60.5){\line(4,-3){4}}        
 \put(7.5,54.5){\framebox(3,3){}}     
\put(27.5,60.5){\line(4,-3){4}}       
 \put(31.5,54.5){\framebox(3,3){}}    
\put(51.5,60.5){\line(4,-3){4}}
 \put(55.5,54.5){\framebox(3,3){}}
\put(3.5,40.5){\line(4,-3){4}}
 \put(7.5,34.5){\framebox(3,3){}}
 \put(27.5,40.5){\line(4,-3){4}}
 \put(31.5,34.5){\framebox(3,3){}}
 \put(51.5,40.5){\line(4,-3){4}}
 \put(55.5,34.5){\framebox(3,3){}}
\put(3.5,20.5){\line(4,-3){4}}
 \put(7.5,14.5){\framebox(3,3){}}
 \put(27.5,20.5){\line(4,-3){4}}
 \put(31.5,14.5){\framebox(3,3){}}
 \put(51.5,20.5){\line(4,-3){4}}
 \put(55.5,14.5){\framebox(3,3){}}
\end{picture}
\vspace{-11ex}
\caption{\label{fig:2DP1}%
The factor graph representation of $f_\X(\x)$ in~(\ref{eqn:PDFP}) on a 2D lattice. The filled circles represent the
variables and the boxes represent the factors.
}
\vspace{1ex}
\setlength{\unitlength}{0.99mm}
\centering
\begin{picture}(62,78)(-6,-6)
\small
\put(0,60){\framebox(4,4){$=$}}
\put(12,60){\framebox(4,4){$+$}}
\put(15.95,61){$\circ$}
\put(17.38,62){\line(1,0){6.6}}
\put(24,60){\framebox(4,4){$=$}}
\put(28,62){\line(1,0){8}}         
\put(36,60){\framebox(4,4){$+$}}
\put(39.95,61){$\circ$}
\put(41.38,62){\line(1,0){6.6}}
\put(48,60){\framebox(4,4){$=$}}
\put(0,50){\framebox(4,4){$+$}}
\put(1.2,48.45){$\circ$}
\put(24,50){\framebox(4,4){$+$}}
\put(25.2,48.45){$\circ$}
\put(48,50){\framebox(4,4){$+$}}
\put(49.2,48.45){$\circ$}
\put(0,40){\framebox(4,4){$=$}}
\put(4,42){\line(1,0){8}}
\put(12,40){\framebox(4,4){$+$}}
\put(15.95,41.2){$\circ$}
\put(17.4,42){\line(1,0){6.6}}
\put(24,40){\framebox(4,4){$=$}}
\put(28,42){\line(1,0){8}}
\put(36,40){\framebox(4,4){$+$}}
\put(39.95,41.2){$\circ$}
\put(41.4,42){\line(1,0){6.6}}
\put(48,40){\framebox(4,4){$=$}}
\put(0,30){\framebox(4,4){$+$}}
\put(1.2,28.45){$\circ$}
\put(24,30){\framebox(4,4){$+$}}
\put(25.2,28.45){$\circ$}
\put(48,30){\framebox(4,4){$+$}}
\put(49.2,28.45){$\circ$}
\put(0,20){\framebox(4,4){$=$}}
\put(4,22){\line(1,0){8}}
\put(12,20){\framebox(4,4){$+$}}
\put(15.95,21){$\circ$}
\put(17.38,22){\line(1,0){6.6}}
\put(24,20){\framebox(4,4){$=$}}
\put(28,22){\line(1,0){8}}
\put(36,20){\framebox(4,4){$+$}}
\put(39.95,21){$\circ$}
\put(41.38,22){\line(1,0){6.6}}
\put(48,20){\framebox(4,4){$=$}}
\put(14,64){\line(0,1){2}}
\put(38,64){\line(0,1){2}}
\put(12,66){\framebox(4,4){}}   
\put(10,67.2){\pos{cb}{$\psi_1$}}
\put(-2.8,61.2){\pos{cb}{$\delta_{=}$}}
\put(7.35,62.4){\pos{cb}{$X_1$}}
\put(-1.25,55.5){\pos{cb}{$X^{''}_1$}}
\put(8.7,57.8){\pos{cb}{$X^{'}_1$}}
\put(21.4,62.4){\pos{cb}{$X_2$}}
\put(18.1,63.8){\pos{cb}{$Y_1$}}
\put(36,66){\framebox(4,4){}}   
\put(14,44){\line(0,1){2}}
\put(38,44){\line(0,1){2}}
\put(12,46){\framebox(4,4){$$}}
\put(36,46){\framebox(4,4){$$}}
\put(14,24){\line(0,1){2}}
\put(38,24){\line(0,1){2}}
\put(12,26){\framebox(4,4){$$}}
\put(36,26){\framebox(4,4){$$}}
\put(0,52){\line(-1,0){2}}
\put(24,52){\line(-1,0){2}}
\put(48,52){\line(-1,0){2}}
\put(-6,50){\framebox(4,4){$$}}
\put(18,50){\framebox(4,4){$$}}
\put(42,50){\framebox(4,4){$$}}
\put(0,32){\line(-1,0){2}}
\put(24,32){\line(-1,0){2}}
\put(48,32){\line(-1,0){2}}
\put(-6,30){\framebox(4,4){$$}}
\put(18,30){\framebox(4,4){$$}}
\put(42,30){\framebox(4,4){$$}}
\put(4,62){\line(1,0){8}}        

\put(28,62){\line(1,0){8}}       

\put(4,42){\line(1,0){8}}

\put(28,42){\line(1,0){8}}

\put(4,22){\line(1,0){8}}

\put(28,22){\line(1,0){8}}

\put(2,54){\line(0,1){6}}
\put(2,48.7){\line(0,-1){4.6}}
\put(26,54){\line(0,1){6}}
\put(26,48.7){\line(0,-1){4.6}}
\put(50,54){\line(0,1){6}}
\put(50,48.7){\line(0,-1){4.6}}
\put(2,34){\line(0,1){6}}
\put(2.0,28.7){\line(0,-1){4.6}}
\put(26,34){\line(0,1){6}}
\put(26.0,28.7){\line(0,-1){4.6}}
\put(50,34){\line(0,1){6}}
\put(50,28.7){\line(0,-1){4.6}}

\put(4,60){\line(4,-3){4}}        
 \put(8,54){\framebox(3,3){}}     
 \put(12,55.5){\pos{cl}{$\phi_1$}}
\put(28,60){\line(4,-3){4}}       
 \put(32,54){\framebox(3,3){}}    
\put(52,60){\line(4,-3){4}}
 \put(56,54){\framebox(3,3){}}
\put(4,40){\line(4,-3){4}}
 \put(8,34){\framebox(3,3){}}
 \put(28,40){\line(4,-3){4}}
 \put(32,34){\framebox(3,3){}}
 \put(52,40){\line(4,-3){4}}
 \put(56,34){\framebox(3,3){}}
\put(4,20){\line(4,-3){4}}
 \put(8,14){\framebox(3,3){}}
 \put(28,20){\line(4,-3){4}}
 \put(32,14){\framebox(3,3){}}
 \put(52,20){\line(4,-3){4}}
 \put(56,14){\framebox(3,3){}}
\end{picture}
\vspace{-12ex}
\caption{\label{fig:2DP2}%
The normal factor graph of $f_\X(\x)$ in~(\ref{eqn:PDFPr1}) on a 2D lattice. The small empty boxes
represent (\ref{eqn:PSIV1}), the big empty boxes represent (\ref{eqn:PSIV2}), boxes labeled $``="$ are equality indicator factors as in~(\ref{eqn:EQ}), and boxes labeled 
$``+"$ are zero-sum indicator factors given by (\ref{eqn:ZSum}). The symbol $``\circ"$ indicates a sign inversion.
}
\end{figure}



\begin{figure}[h]
\setlength{\unitlength}{0.99mm}
\centering
\begin{picture}(62,71)(-6,-6)
\small
\put(24.5,60){\framebox(3,3){$$}}
\put(24,50){\framebox(4,4){$+$}}
\put(25.2,48.6){$\circ$}
\put(0.85,40.5){\framebox(3,3){$$}}
\put(4,42){\line(1,0){8}}
\put(12,40){\framebox(4,4){$+$}}
\put(15.95,41.2){$\circ$}
\put(17.25,42){\line(1,0){6.75}}
\put(24,40){\framebox(4,4){$=$}}
\put(36.5,40){\framebox(4,4){$+$}}
\put(40.45,41.2){$\circ$}
\put(41.76,42){\line(1,0){6.32}}
\put(48.1,40.5){\framebox(3,3){$$}}
\put(24,30){\framebox(4,4){$+$}}
\put(25.2,28.6){$\circ$}
\put(24.5,21){\framebox(3,3){$$}}
\put(10,47){\pos{cb}{$\psi_1$}}
\put(45.0,42.4){\pos{cb}{$X_2$}}
\put(22.8,35.4){\pos{cb}{$X^{(3)}_1$}}
\put(28.4,44.8){\pos{cb}{$X^{'}_1$}}
\put(32.1,42.4){\pos{cb}{$X_1$}}
\put(20.8,42.4){\pos{cb}{$X^{''}_1$}}
\put(33.19,37.0){\pos{cb}{$X^{(4)}_1$}}
\put(23.4,56.3){\pos{cb}{$X_3$}}
\put(14,44){\line(0,1){2}}
\put(38.5,44){\line(0,1){2}}
\put(12,46){\framebox(4,4){$$}}
\put(36.5,46){\framebox(4,4){$$}}
\put(24,52){\line(-1,0){2}}
\put(18,50){\framebox(4,4){$$}}
\put(24,32){\line(-1,0){2}}
\put(18,30){\framebox(4,4){$$}}
\put(4,42){\line(1,0){8}}
\put(28,42){\line(1,0){8.5}}
\put(26,54){\line(0,1){6}}
\put(26,48.75){\line(0,-1){4.6}}
\put(26,34){\line(0,1){6}}
\put(26.0,28.75){\line(0,-1){4.68}}
\put(37,32.5){\pos{cl}{$\phi_1$}}
\put(28,40){\line(4,-5){5}}
\put(33,30.7){\framebox(3,3){}}
\end{picture}
\vspace{-17ex}
\caption{\label{fig:PStar}%
The normal factor graph of $f_\X(\x)$ in~(\ref{eqn:PDFPr1}) for a star graph $S_5$. Note that the graph is cycle-free. The 
precision matrix of the model is given in Example 2.
}
\end{figure}


\section{Gibbs Sampling}
\label{sec:GibbsP}

The Gibbs sampling algorithm (also known as the heat bath algorithm in statistical physics) is a Markov chain Monte Carlo 
method (mainly used in Bayesian statistics, image analysis, and machine learning) to 
generate samples according to a multivariate target distribution, here $f_\X(\cdot)$. 

The sampler chooses (randomly or deterministically) $x_i$, a coordinate of $\x$, then updates it by drawing a new sample according to 
the conditional distribution $f_\X(\cdot \cond x_j, j \ne i)$.

Among different updating strategies, we concentrate on the random-sweep Gibbs sampler. 
The sampler starts from an initial vector 
$\x^{(0)} = (x_1^{(0)}, x_2^{(0)}, \ldots, x_{|\V|}^{(0)})^\intercal$. Then, at iteration $\ell \ge 1$, the algorithm runs the following two steps $|\V|$ times: 
\begin{itemize}
\item randomly (i.e., uniformly and independently) choose an index $i$ from $\{1,2, \ldots, |\V|\}$.
\item update $x_i^{(\ell)}$ according to $f_\X(\,\cdot\,\cond x_j^{(\ell-1)}, j \ne i)$.
\end{itemize}

Under mild regularity conditions, the random-sweep Gibbs sampler creates a reversible Markov chain that
converges geometrically to the target distribution
$f_\X(\cdot)$. For more details, see~\cite{GG:84, liu2001monte, christian1999monte, murphy:2012}.

\subsection{convergence rates}
\label{sec:ConvP}

Let $\{Q_{ii}\}_{i=1}^{|\V|}$ denote the diagonal entries of $\Q$ in (\ref{eqn:QP2}), and define
\begin{equation}
\label{eqn:AGen}
\A = \I - \text{diag$(Q_{11}^{-1}, Q_{22}^{-1}, \ldots, Q_{|\V||\V|}^{-1})$}\Q
\end{equation}

The convergence rate of the random-sweep Gibbs sampler to a multivariate Gaussian target distribution is given by
\begin{equation}
\label{eqn:primalrate}
r_\text{p} = \big(|\V|^{-1}(|\V|-1 + \lambda_{\text{max}}(\A)\big)^{|\V|}
\end{equation}
For the derivation of (\ref{eqn:primalrate}), see~\cite{amit1996convergence}, \cite[Theorem 2]{roberts1997updating}. We observe that  $\A$ has zero 
diagonal entries; therefore, not all of its eigenvalues can be negative. In general, $\lambda_\mathrm{max}(\A)$ differs from $\rho(\A)=\max_i |\lambda_i(\A)|$. 

\begin{proposition}
\label{prop:Prop1}
The convergence rate of the random-sweep Gibbs sampler in $k$-regular homogeneous models is given by
\begin{equation}
r_\text{p} = \Big(1-\frac{\sigma^2}{\sigma^2+ks^2}\frac{1}{|\V|}\Big)^{|\V|}
\end{equation}
with 
\begin{equation}
\label{eqn:primalrate3}
\lim_{|\V| \to \infty} r_\text{p} = \textrm{exp}\Big(-\frac{\sigma^2}{\sigma^2+ks^2}\Big)
\end{equation}
as the asymptotic convergence rate.
\end{proposition}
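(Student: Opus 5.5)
The plan is to compute $\lambda_{\max}(\A)$ explicitly for a $k$-regular homogeneous model and then substitute it into the general rate formula (\ref{eqn:primalrate}).

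First, use (\ref{eqn:QP2}) together with the fact that in a $k$-regular graph the Laplacian is $\Lap = k\I - \mathbf{W}$, where $\mathbf{W}$ is the adjacency matrix. Then every diagonal entry of $\Q$ equals $Q_{ii} = \frac{1}{s^2} + \frac{k}{\sigma^2} = \frac{\sigma^2 + k s^2}{s^2\sigma^2}$. So the diagonal scaling in (\ref{eqn:AGen}) is a scalar multiple of the identity, and
\begin{equation*}
\A = \I - Q_{11}^{-1}\Q = \I - \frac{s^2\sigma^2}{\sigma^2+ks^2}\Big(\frac{1}{s^2}\I + \frac{1}{\sigma^2}(k\I-\mathbf{W})\Big) = \frac{s^2}{\sigma^2+ks^2}\,\mathbf{W}.
\end{equation*}
This matrix has zero diagonal, consistent with the remark after (\ref{eqn:primalrate}).

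Second, I would invoke the Perron--Frobenius fact for a connected $k$-regular graph: $\lambda_{\max}(\mathbf{W}) = k$, attained by the all-ones vector $\mathbf{1}$ since $\mathbf{W}\mathbf{1} = k\mathbf{1}$. Equivalently, $\lambda_{\min}(\Lap)=0$. Because the scalar factor is positive, this gives
\begin{equation*}
\lambda_{\max}(\A) = \frac{ks^2}{\sigma^2+ks^2}, \qquad 1-\lambda_{\max}(\A) = \frac{\sigma^2}{\sigma^2+ks^2}.
\end{equation*}

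Third, write $|\V|-1+\lambda_{\max}(\A) = |\V| - (1-\lambda_{\max}(\A))$ and substitute into (\ref{eqn:primalrate}). This yields $r_\text{p} = \big(1 - \frac{\sigma^2}{\sigma^2+ks^2}\frac{1}{|\V|}\big)^{|\V|}$. The limit (\ref{eqn:primalrate3}) then follows from the standard identity $(1-c/n)^n \to e^{-c}$, applied with $c = \sigma^2/(\sigma^2+ks^2)$, which does not depend on $|\V|$ for fixed $k$.

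The only step needing care is identifying $\lambda_{\max}(\A)$ rather than the spectral radius. For bipartite regular graphs, $-k$ is also an eigenvalue of $\mathbf{W}$, so $\rho(\A)=\lambda_{\max}(\A)$ there. However, (\ref{eqn:primalrate}) uses $\lambda_{\max}$, so the largest eigenvalue $k$ of $\mathbf{W}$ is what matters, and it holds for every $k$-regular graph. The argument therefore needs no case split.
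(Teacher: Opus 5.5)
Your proof is correct and follows essentially the paper's route. The paper also uses that $Q_{ii}=1/s^2+k/\sigma^2$ is constant, so $\lambda_{\max}(\A)=1-Q_{ii}^{-1}\lambda_{\min}(\Q)$ with $\lambda_{\min}(\Q)=1/s^2$ from $\lambda_{\min}(\Lap)=0$; your form $\A=\frac{s^2}{\sigma^2+ks^2}\mathbf{W}$ with $\lambda_{\max}(\mathbf{W})=k$ is the same fact written through $\Lap=k\I-\mathbf{W}$, and your caveat that the limit needs $k$ fixed is a worthwhile precision. One small correction: $\rho(\A)=\lambda_{\max}(\A)$ holds for every regular graph here, since $\A$ is a nonnegative multiple of $\mathbf{W}$, so it is not special to bipartite graphs.
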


\noindent
{\bf Proof.} The Laplacian matrix $\Lap = \B\B^\intercal$ is non-negative definite with $\text{rank$(\Lap)$} = |\V| -1$, and consequently $\Lap$ has an 
eigenvalue equal to zero with algebraic multiplicity one. We conclude that $\lambda_{\text{min}}(\Lap) = 0$. 

From~(\ref{eqn:QP2}), we obtain
\begin{equation}
\label{eqn:Qlambdamin}
\lambda_{\text{min}}(\Q) = \frac{1}{s^2}
\end{equation}

The diagonal entries of $\Q$ are equal to $1/s^2 + k/\sigma^2$. The largest eigenvalue of $\A$ in (\ref{eqn:AGen}) is thus
\begin{IEEEeqnarray}{rCl}
\lambda_{\text{max}}(\A) & = & 1 - \big(\frac{1}{s^2} + \frac{k}{\sigma^2}\big)^{-1}\frac{1}{s^2} \\[1mm]
				      & = & \frac{ks^2}{\sigma^2+ks^2} \label{eqn:primalrateHomogen}
\end{IEEEeqnarray}

Since the convergence rate only depends on $\lambda_{\text{max}}(\A)$, we obtain the desired convergence rate in closed-form after substituting (\ref{eqn:primalrateHomogen}) into (\ref{eqn:primalrate}). The asymptotic convergence rate 
is easily derived by considering the limit $|\V| \to \infty$.
\hfill $\blacksquare$

In a fully-connected graph $K_{|\V|}$, $k = |\V| -1$. It follows that
\begin{equation}
\lim_{|\V| \to \infty} r_\text{p} = 1
\end{equation}
and if $G$ is a balanced complete bipartite graph $K_{|\V/2|, |\V/2|}$, then $k=|\V|/2$. Thus
\begin{equation}
r_\text{p} = \Big(1-\frac{\sigma^2}{\sigma^2+s^2|\V|/2}\frac{1}{|\V|}\Big)^{|\V|}
\end{equation}
We will again obtain
\begin{equation}
\lim_{|\V| \to \infty} r_\text{p} = 1
\end{equation}

The derivation of the convergence rate for the homogeneous star graph $S_{|\V|}$ (as in Example 2) is provided 
in Appendix~\ref{appsec:StarG}.

\section{The Transformed (Dual) Model}
\label{sec:DualModel}

We examine the dual normal factor graph representation of the PDF in~(\ref{eqn:PDFPr1}). The dualization of normal graphs 
has been studied in the areas of error-correcting codes (such as codes on graphs)~\cite{RU:08}, algebraic topology~\cite{Forney:18}, and 
statistical physics~\cite{MoLo:ISIT2013}.

The dual normal graph is obtained by taking the Fourier transform of the local factors of a normal factor graph. The Fourier transform of a function $h(\x)$ is given by
\begin{IEEEeqnarray}{rCl}
\tilde{h}(\tilde{\x}) & =  & \FT(h)(\tilde{\x})\\
			    & = & \int_{-\infty}^{\infty} h(\x)\textrm{exp}(-\mathrm{i}\x\tilde{\x})d\x \label{eqn:FT}
\end{IEEEeqnarray}
where $\mathrm{i} = \sqrt{-1}$, see~\cite[Chapter 5]{stein2011fourier}.

The dual of the models that we study in this paper can be obtained by adopting the following steps:
\begin{itemize}
\item replace each variable $x_i$ by its dual variable $\tilde{x}_i$.
\item replace the factors $\phi(\cdot)$ and $\psi(\cdot)$ by their corresponding 1D Fourier transforms $\tilde{\phi}(\cdot)$ and $\tilde{\psi}(\cdot)$.
\item replace equality indicator factors by zero-sum indicator factors, and vice-versa.
\end{itemize}

From (\ref{eqn:FT}), we can obtain the 1D Fourier transform of the local factors (\ref{eqn:PSIV1}) and (\ref{eqn:PSIV2}) as
\begin{IEEEeqnarray}{rCl}
\tilde{\phi}(\tilde{x}_v)  & = & \sqrt{2\pi s^2}\,\textrm{exp}\Big(-\frac{s^2\tilde{x}_v^2}{2}\Big) \label{eqn:PSIDual1} \\[1mm]
\tilde{\psi}(\tilde{y}_e) & = & \sqrt{2\pi \sigma^2}\, \textrm{exp}\Big(-\frac{\sigma^2\tilde{y}_e^2}{2}\Big) \label{eqn:PSIDual2}
\end{IEEEeqnarray}
which are also Gaussian, up to scale. Hence the dual model remains Gaussian.

In the dual normal factor graph, $\tilde{\X}$ can be computed as a function of $\tilde{\Y}$. Indeed
\begin{equation}
\label{eqn:XfromY}
\tilde{\X} = \mathbf{B}\tilde{\Y}
\end{equation} 
where $\mathbf{B}$ is the incidence matrix described in Section~\ref{sec:Model}.
We will refer to $\tilde{\X}$ as auxiliary (or dependent) random variables~\cite{Forney:18}.

In the dual domain, the PDF is a function of $\tilde{\y}$, and is given by
\begin{equation}
\label{eqn:PDFDu3}
g_{\tilde{\Y}}(\tilde{\y}) = \frac{1}{Z_g}\prod_{v \in \V}\tilde{\phi}(\tilde{x}_v)\prod_{e\in \E}\tilde{\psi}(\tilde{y}_e) 
\end{equation}
where $Z_g$ is the appropriate normalization constant, and the marginal density over $\tilde{Y}_1$ is
\begin{equation}
\label{eqn:MargD} 
g_{\tilde{Y}_1}(\tilde{y}_1) = \int_{\tilde{y}_2}\int_{\tilde{y}_3}\ldots\int_{\tilde{y}_{|\E|}} g_{\tilde{\Y}}(\tilde{\y})d\tilde{y}_2d\tilde{y}_3\ldots d\tilde{y}_{|\E|}
\end{equation}

There are two important connections between a normal factor graph and its dual:
\begin{itemize}
\item the normalization constants $Z_f$ and $Z_g$ are equal up to a scale factor~\cite[Theorem 2]{AY:2011}.
\item the marginal densities $f_{X_i}(\cdot)$ and $g_{\tilde{Y}_i}(\cdot)$ are related by
\begin{equation}
\label{eqn:MargT1}
\frac{f_{X_i}(x)}{\phi_i(x)} = \FT\Big(\frac{g_{\tilde{Y}_i}}{\tilde{\phi}_i}\Big)(x)
\end{equation}
or equivalently 
\begin{equation}
\label{eqn:MargT2}
f_{X_i}(x) = \phi_i(x) \int_{-\infty}^{\infty} \frac{g_{\tilde{Y}_i}(\tilde{x})}{\tilde{\phi}_i(\tilde{x})}\textrm{exp}(-\mathrm{i}x\tilde{x})d\tilde{x}
\end{equation}
see~\cite[Proposition 1]{molkaraie2022mappings}.
\end{itemize}

Let the symmetric positive definite matrix $\mathbf{R} \in \R^{|\E|\times |\E|}$ denote the
precision matrix associated with the dual normal graph, i.e., with the PDF in (\ref{eqn:PDFDu3}). 
From (\ref{eqn:XfromY}) and (\ref{eqn:PDFDu3}) we obtain
\begin{IEEEeqnarray}{rCl}
g_{\tilde{\Y}}(\tilde{\y}) & \propto & \textrm{exp}\big(-\frac{s^2}{2}(\mathbf{B}\tilde{\y}^\intercal\mathbf{B}\tilde{\y}\big)\textrm{exp}\big(-\frac{\sigma^2}{2}\tilde{\y}^\intercal\tilde{\y}\big)  \\[1mm]
        & = & \textrm{exp}\big(-\frac{1}{2}\tilde{\y}^\intercal(s^2\B^\intercal\B + \sigma^2\I)\tilde{\y}\big) \label{eqn:PDFDu4} 
\end{IEEEeqnarray}
Therefore, $\mathbf{R}$ decomposes into 
\begin{equation}
\label{eqn:QD1}
\mathbf{R} = \sigma^2\I + s^2\B^\intercal\B 
\end{equation}
where $\mathbf{R}^{-1} = \text{Cov}(\tilde{\mathbf{Y}})$. The symmetric non-negative matrix $\B^\intercal\B$ is sometimes called the edge Laplacian of $G$.

In non-homogeneous models, suppose $\{1/s_v^2\}_{v=1}^{|\V|}$ and $\{1/\sigma_e^2\}_{e=1}^{|\E|}$ are the variances associated with 
the vertices and the edges, respectively. The precision matrix will be
\begin{equation}
\label{eqn:QPNHDual}
\mathbf{R} = \mathbf{D}_{\sigma} + \B^\intercal\mathbf{D}_{s}\B
\end{equation}
where diagonal matrices $\mathbf{D}_{\sigma}$ and $\mathbf{D}_{s}$ are as in~(\ref{eqn:QPNH}). 
We will again denote the local factors by $\tilde{\phi}_v(\cdot)$ and $\tilde{\psi}_e(\cdot)$ if the model is not homogeneous.

\begin{example} The dual of the normal factor graph in Fig.~\ref{fig:2DP1} is illustrated in Fig.~\ref{fig:2DP3}, where 
the small empty boxes represent (\ref{eqn:PSIDual1}) and the big empty boxes represent (\ref{eqn:PSIDual2}). The symbol $``\circ"$ denotes a sign inversion. E.g., the zero-sum indicator factor in Fig.~\ref{fig:2DP3} imposes
the constraint that $\tilde{X}_2 - \tilde{Y}_1 + \tilde{Y}_2 + \tilde{Y}_3= 0$. Here, $\tilde{\x}$ can be computed as a function of $\tilde{\y}$.

In homogeneous models, each variable, say $\tilde{y}_1$, appears in one factor $\tilde{\psi}(\cdot)$ attached to the corresponding equality indicator 
factor and to two zero-sum indicator factors attached to $\tilde{Y}_1$ because of pairwise interactions. We conclude that the diagonal entries of  
the precision matrix are $R_{ii}= \sigma^2+2s^2$ for $1 \le i \le |\E|$. 
\end{example}

\begin{example}
The dual of the normal factor graph of the star graph in Fig.~\ref{fig:PStar} can be easily obtained via the 
dualization procedure explained in Section~\ref{sec:DualModel}. In the homogeneous case, the PDF is 
\begin{equation*}
g_{\tilde{\Y}}(\tilde{\y}) \propto \textrm{exp}\Big(-\frac{\sigma^2 + 2s^2}{2}\sum_{e \in \E}\tilde{y}^2_e\Big)\textrm{exp}\Big(-s^2\sum_{e \ne e'}\tilde{y}_e\tilde{y}_{e'}\Big) 
\end{equation*}
Thus $R_{ii}= \sigma^2+2s^2$ for $1 \le i \le |\E|$.

\end{example}

\begin{figure}[t]
\setlength{\unitlength}{0.98mm}
\centering
\begin{picture}(62,78.2)(-6,-6)
\small
\put(0,60){\framebox(4,4){$+$}}
\put(12,60){\framebox(4,4){$=$}}
\put(15.95,61){$\circ$}
\put(17.38,62){\line(1,0){6.6}}
\put(24,60){\framebox(4,4){$+$}}
\put(28,62){\line(1,0){8}}         
\put(36,60){\framebox(4,4){$=$}}
\put(39.95,61){$\circ$}
\put(41.38,62){\line(1,0){6.6}}
\put(48,60){\framebox(4,4){$+$}}
\put(0,50){\framebox(4,4){$=$}}
\put(1.2,48.45){$\circ$}
\put(24,50){\framebox(4,4){$=$}}
\put(25.2,48.45){$\circ$}
\put(48,50){\framebox(4,4){$=$}}
\put(49.2,48.45){$\circ$}
\put(0,40){\framebox(4,4){$+$}}
\put(4,42){\line(1,0){8}}
\put(12,40){\framebox(4,4){$=$}}
\put(15.95,41.2){$\circ$}
\put(17.4,42){\line(1,0){6.6}}
\put(24,40){\framebox(4,4){$+$}}
\put(28,42){\line(1,0){8}}
\put(36,40){\framebox(4,4){$=$}}
\put(39.95,41.2){$\circ$}
\put(41.4,42){\line(1,0){6.6}}
\put(48,40){\framebox(4,4){$+$}}
\put(0,30){\framebox(4,4){$=$}}
\put(1.2,28.45){$\circ$}
\put(24,30){\framebox(4,4){$=$}}
\put(25.2,28.45){$\circ$}
\put(48,30){\framebox(4,4){$=$}}
\put(49.2,28.45){$\circ$}
\put(0,20){\framebox(4,4){$+$}}
\put(4,22){\line(1,0){8}}
\put(12,20){\framebox(4,4){$=$}}
\put(15.95,21){$\circ$}
\put(17.38,22){\line(1,0){6.6}}
\put(24,20){\framebox(4,4){$+$}}
\put(28,22){\line(1,0){8}}
\put(36,20){\framebox(4,4){$=$}}
\put(39.95,21){$\circ$}
\put(41.38,22){\line(1,0){6.6}}
\put(48,20){\framebox(4,4){$+$}}
\put(14,64){\line(0,1){2}}
\put(38,64){\line(0,1){2}}
\put(12,66){\framebox(4,4){}}   
\put(10,67.2){\pos{cb}{$\tilde{\psi}_{1}$}}
\put(36,66){\framebox(4,4){}}   
\put(14,44){\line(0,1){2}}
\put(38,44){\line(0,1){2}}
\put(12,46){\framebox(4,4){$$}}
\put(36,46){\framebox(4,4){$$}}
\put(14,24){\line(0,1){2}}
\put(38,24){\line(0,1){2}}
\put(12,26){\framebox(4,4){$$}}
\put(36,26){\framebox(4,4){$$}}
\put(0,52){\line(-1,0){2}}
\put(24,52){\line(-1,0){2}}
\put(48,52){\line(-1,0){2}}
\put(-6,50){\framebox(4,4){$$}}
\put(18,50){\framebox(4,4){$$}}
\put(42,50){\framebox(4,4){$$}}
\put(0,32){\line(-1,0){2}}
\put(24,32){\line(-1,0){2}}
\put(48,32){\line(-1,0){2}}
\put(-6,30){\framebox(4,4){$$}}
\put(18,30){\framebox(4,4){$$}}
\put(42,30){\framebox(4,4){$$}}
\put(4,62){\line(1,0){8}}        

\put(28,62){\line(1,0){8}}       

\put(4,42){\line(1,0){8}}

\put(28,42){\line(1,0){8}}

\put(4,22){\line(1,0){8}}

\put(28,22){\line(1,0){8}}

\put(2,54){\line(0,1){6}}
\put(2,48.7){\line(0,-1){4.6}}
\put(26,54){\line(0,1){6}}
\put(26,48.7){\line(0,-1){4.6}}
\put(50,54){\line(0,1){6}}
\put(50,48.7){\line(0,-1){4.6}}
\put(2,34){\line(0,1){6}}
\put(2.0,28.7){\line(0,-1){4.6}}
\put(26,34){\line(0,1){6}}
\put(26.0,28.7){\line(0,-1){4.6}}
\put(50,34){\line(0,1){6}}
\put(50,28.7){\line(0,-1){4.6}}
\put(18.5,63.8){\pos{cb}{$\tilde{Y}_1$}}
\put(42.5,63.8){\pos{cb}{$\tilde{Y}_2$}}
\put(23.2,54.7){\pos{cb}{$\tilde{Y}_3$}}
\put(8.9,57.9){\pos{cb}{$\tilde{X}_1$}}
\put(32.9,57.9){\pos{cb}{$\tilde{X}_2$}}
\put(26.5,65.2){\pos{cb}{$\delta_{+}$}}
\put(4,60){\line(4,-3){4}}        
 \put(8,54){\framebox(3,3){}}     
 \put(12,55){\pos{cl}{$\tilde{\phi}_1$}}
\put(28,60){\line(4,-3){4}}       
 \put(32,54){\framebox(3,3){}}    
\put(52,60){\line(4,-3){4}}
 \put(56,54){\framebox(3,3){}}
\put(4,40){\line(4,-3){4}}
 \put(8,34){\framebox(3,3){}}
 \put(28,40){\line(4,-3){4}}
 \put(32,34){\framebox(3,3){}}
 \put(52,40){\line(4,-3){4}}
 \put(56,34){\framebox(3,3){}}
\put(4,20){\line(4,-3){4}}
 \put(8,14){\framebox(3,3){}}
 \put(28,20){\line(4,-3){4}}
 \put(32,14){\framebox(3,3){}}
 \put(52,20){\line(4,-3){4}}
 \put(56,14){\framebox(3,3){}}
\end{picture}
\vspace{-11ex}
\caption{\label{fig:2DP3}%
The dual of the normal factor graph in Fig.~\ref{fig:2DP2} that represents the factorization in~(\ref{eqn:PDFDu3}). The small empty boxes
represent (\ref{eqn:PSIDual1}) and the big empty boxes represent (\ref{eqn:PSIDual2}).
}
\end{figure}

The next proposition establishes an algebraic connection between covariance matrices in the primal and dual domains. Consequently, one may estimate primal covariance 
statistics entirely from samples generated in the dual model.

\begin{proposition}
\label{prop:Prop2}
The covariance matrix in the primal domain $\text{Cov}(\X)$ and the covariance matrix of the auxiliary variables $\tilde{\X}$ in the 
dual domain $\text{Cov}(\tilde{\X})$ are related by 
\begin{equation}
\label{eq:cov_woodbury}
\mathbf{D}_s^{-1/2}\text{Cov}(\X)\mathbf{D}_s^{-1/2} + \mathbf{D}^{1/2}_s \text{Cov}(\tilde{\X}) \mathbf{D}^{1/2}_s = \mathbf{I}
\end{equation}
where $\tilde{\X} = \mathbf{B}\tilde{\Y}$.
\end{proposition}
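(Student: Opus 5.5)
The plan is to reduce both covariance matrices to expressions in a single rectangular matrix and then apply the Woodbury identity, as the equation label suggests. I will work in the non-homogeneous setting, so that the primal precision matrix is $\Q = \mathbf{D}_s^{-1} + \B\mathbf{D}_\sigma^{-1}\B^\intercal$ as in (\ref{eqn:QPNH}). The dual precision matrix is $\mathbf{R} = \mathbf{D}_\sigma + \B^\intercal\mathbf{D}_s\B$ as in (\ref{eqn:QPNHDual}), with $\text{Cov}(\tilde{\Y}) = \mathbf{R}^{-1}$. The homogeneous case follows by taking $\mathbf{D}_s = s^2\I$ and $\mathbf{D}_\sigma = \sigma^2\I$.

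First, since $\tilde{\X} = \B\tilde{\Y}$ is a linear function of $\tilde{\Y}$ by (\ref{eqn:XfromY}), we have $\text{Cov}(\tilde{\X}) = \B\mathbf{R}^{-1}\B^\intercal$. This is a singular (rank at most $|\V|-1$) but well-defined covariance. Next, define the $|\V|\times|\E|$ matrix
\begin{equation*}
\mathbf{M} = \mathbf{D}_s^{1/2}\B\mathbf{D}_\sigma^{-1/2}.
\end{equation*}
A direct computation gives $\mathbf{D}_s^{1/2}\Q\mathbf{D}_s^{1/2} = \I + \mathbf{M}\mathbf{M}^\intercal$. Therefore the first term of (\ref{eq:cov_woodbury}) is
\begin{equation*}
\mathbf{D}_s^{-1/2}\Q^{-1}\mathbf{D}_s^{-1/2} = (\I + \mathbf{M}\mathbf{M}^\intercal)^{-1}.
\end{equation*}
Similarly, $\mathbf{R} = \mathbf{D}_\sigma^{1/2}(\I + \mathbf{M}^\intercal\mathbf{M})\mathbf{D}_\sigma^{1/2}$. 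Hence the second term of (\ref{eq:cov_woodbury}) is
\begin{equation*}
\mathbf{D}_s^{1/2}\B\mathbf{R}^{-1}\B^\intercal\mathbf{D}_s^{1/2} = \mathbf{M}(\I + \mathbf{M}^\intercal\mathbf{M})^{-1}\mathbf{M}^\intercal.
\end{equation*}

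Finally, the Woodbury (push-through) identity gives
\begin{equation*}
(\I + \mathbf{M}\mathbf{M}^\intercal)^{-1} = \I - \mathbf{M}(\I + \mathbf{M}^\intercal\mathbf{M})^{-1}\mathbf{M}^\intercal.
\end{equation*}
Both inverses exist because $\I + \mathbf{M}\mathbf{M}^\intercal$ and $\I + \mathbf{M}^\intercal\mathbf{M}$ are positive definite. Adding the two terms then yields exactly (\ref{eq:cov_woodbury}).

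The algebra itself is routine. The step I expect to need the most care is bookkeeping of the parameter conventions: the dual factors $\tilde{\phi}_v$ and $\tilde{\psi}_e$ carry the \emph{inverses} of the primal variances, so the same $\mathbf{D}_s$ and $\mathbf{D}_\sigma$ appear in $\mathbf{R}$ un-inverted. I would verify this against the homogeneous formulas (\ref{eqn:QP2}) and (\ref{eqn:QD1}) before carrying out the factorization, since this is what makes a single matrix $\mathbf{M}$ serve both domains.
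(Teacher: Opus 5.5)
Your proof is correct and essentially the paper's: both rest on the Woodbury identity linking $\Q^{-1}$ to $\B\mathbf{R}^{-1}\B^\intercal$. The only difference is that you rescale first via $\mathbf{M}=\mathbf{D}_s^{1/2}\B\mathbf{D}_\sigma^{-1/2}$, using the symmetric form $(\I+\mathbf{M}\mathbf{M}^\intercal)^{-1}=\I-\mathbf{M}(\I+\mathbf{M}^\intercal\mathbf{M})^{-1}\mathbf{M}^\intercal$, whereas the paper applies Woodbury to the unscaled $\Q$, obtains $\text{Cov}(\X)=\mathbf{D}_s-\mathbf{D}_s\text{Cov}(\tilde{\X})\mathbf{D}_s$, and conjugates by $\mathbf{D}_s^{-1/2}$ at the end.
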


\noindent
{\bf Proof.} From (\ref{eqn:QPNH}), we can compute $\text{Cov}(\X)$ as
\begin{IEEEeqnarray}{rCl}
\text{Cov}(\X) & = & \mathbf{Q}^{-1} \\
& = & (\mathbf{D}_{s}^{-1} + \B\mathbf{D}_{\sigma}^{-1}\B^\intercal)^{-1}
\end{IEEEeqnarray}

Applying the Woodbury matrix identity\footnote{The identity states that
the inverse of the matrix \mbox{$\mathbf{A}+ \mathbf{U}\mathbf{C}\mathbf{V}$} is equal to $\mathbf{A}^{-1} - \mathbf{A}^{-1}\mathbf{U}(\mathbf{C}^{-1} +\mathbf{V}\mathbf{A}^{-1}\mathbf{U})^{-1}\mathbf{V}\mathbf{A}^{-1}$; see Sherman-Morrison-Woodbury formula in~\cite{horn2013matrix}.} yields
\begin{IEEEeqnarray}{rCl}
    \text{Cov}(\X) &= & \mathbf{D}_s - \mathbf{D}_s \mathbf{B}(\mathbf{D}_\sigma + \mathbf{B}^\intercal\mathbf{D}_s \mathbf{B})^{-1} \mathbf{B}^\intercal\mathbf{D}_s \\
    &= & \mathbf{D}_s - \mathbf{D}_s \mathbf{B}\text{Cov}(\tilde{\mathbf{Y}}) \mathbf{B}^\intercal \mathbf{D}_s \label{eq:cov_woodbury2} 
\end{IEEEeqnarray}
where (\ref{eq:cov_woodbury2}) follows from~\eqref{eqn:QPNHDual}. 

As $\tilde{\X} = \mathbf{B}\tilde{\Y}$, it holds that $\text{Cov}(\tilde{\X}) = \mathbf{B}\text{Cov}(\tilde{\mathbf{Y}})\mathbf{B}^\intercal$. Thus 
\begin{equation}
\label{eq:cov_woodbury3}
\text{Cov}(\X) = \mathbf{D}_s - \mathbf{D}_s \text{Cov}(\tilde{\X}) \mathbf{D}_s 
\end{equation}
The result follows from premultiplying and postmultiplying both sides of (\ref{eq:cov_woodbury3}) by $\mathbf{D}_s^{-1/2}$.   \hfill $\blacksquare$

Intuitively, Proposition 2 indicates that the weighted total variance is conserved, in the sense that low variance (i.e., high certainty) in the 
primal domain corresponds to high variance (i.e., low certainty) in the dual domain. Indeed, from (\ref{eq:cov_woodbury}) we obtain 
\begin{equation}
\label{eq:cov_woodbury5}
\frac{1}{s_v^2}\text{Var$(X_v)$} + s_v^2\text{Var$(\tilde{X}_v)$} = 1
\end{equation}
for $1 \le v \le |\V|$. 

Here \eqref{eq:cov_woodbury5} can be viewed as a form of statistical uncertainty principle, demonstrating that the system shares a fixed variance budget. 
Furthermore, from a practical standpoint, \eqref{eq:cov_woodbury5} allows us to estimate the variance of $X_v$ from the variance of $\tilde{X}_v$, which in turn can be estimated directly from 
samples drawn in the dual domain using Gibbs sampling; see Section~\ref{sec:GibbsDual}.

From (\ref{eq:cov_woodbury5}), it is straightforward to show that
\begin{IEEEeqnarray}{rCl}
\label{eq:ExactvarBounds}
\text{Var$(X_v)$} & \le & s_v^2 \\
\text{Var$(\tilde{X}_v)$} & \le & \frac{1}{s_v^2}
\end{IEEEeqnarray}
and by applying the AM--GM inequality to (\ref{eq:cov_woodbury5}), we obtain
\begin{equation}
\label{eq:uncertainly}
\text{Var$(X_v)$}\text{Var$(\tilde{X}_v)$} \le \frac{1}{4}
\end{equation}
with equality if and only if $s_v^{-2}\text{Var$(X_v)$} = s_v^{2}\text{Var$(\tilde{X}_v)$} = \frac{1}{2}$.

%

\begin{proposition}
\label{prop:Prop3}
The covariance matrix in the dual domain $\text{Cov}(\tilde{\Y})$ and the covariance matrix of the 
auxiliary variables $\Y$ in the primal domain $\text{Cov}(\Y)$  are related by
\begin{equation}
\label{eq:cov_woodbury7}
\mathbf{D}^{1/2}_\sigma \text{Cov}(\tilde{\Y}) \mathbf{D}^{1/2}_\sigma + \mathbf{D}_\sigma^{-1/2}\text{Cov}(\Y)\mathbf{D}_\sigma^{-1/2} = \mathbf{I}
\end{equation}
where $\Y = \mathbf{B}^\intercal\X$. 
\end{proposition}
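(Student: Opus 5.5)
The plan is to mirror the proof of Proposition~\ref{prop:Prop2}, with the roles of the primal and dual domains interchanged. I start from the dual precision matrix in \eqref{eqn:QPNHDual}, so that
\begin{equation*}
\text{Cov}(\tilde{\Y}) = \mathbf{R}^{-1} = (\mathbf{D}_\sigma + \B^\intercal\mathbf{D}_s\B)^{-1},
\end{equation*}
and apply the Woodbury identity with $\mathbf{A}=\mathbf{D}_\sigma$, $\mathbf{U}=\B^\intercal$, $\mathbf{C}=\mathbf{D}_s$ and $\mathbf{V}=\B$. This gives
\begin{equation*}
\text{Cov}(\tilde{\Y}) = \mathbf{D}_\sigma^{-1} - \mathbf{D}_\sigma^{-1}\B^\intercal(\mathbf{D}_s^{-1} + \B\mathbf{D}_\sigma^{-1}\B^\intercal)^{-1}\B\mathbf{D}_\sigma^{-1}.
\end{equation*}

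Next I recognize the inner inverse. By \eqref{eqn:QPNH} it equals $\Q^{-1}=\text{Cov}(\X)$. Since $\Y=\B^\intercal\X$, we have $\text{Cov}(\Y)=\B^\intercal\text{Cov}(\X)\B$, and therefore
\begin{equation*}
\text{Cov}(\tilde{\Y}) = \mathbf{D}_\sigma^{-1} - \mathbf{D}_\sigma^{-1}\text{Cov}(\Y)\mathbf{D}_\sigma^{-1}.
\end{equation*}
Pre- and postmultiplying both sides by $\mathbf{D}_\sigma^{1/2}$ and rearranging yields \eqref{eq:cov_woodbury7}.

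The one step that needs care is the parametrization. The dual diagonal matrices in \eqref{eqn:QPNHDual} must be consistent with those in \eqref{eqn:QPNH}, so that the Woodbury inner term is exactly $\Q$. Here $\mathbf{R}$ uses $\mathbf{D}_\sigma$ and $\mathbf{D}_s$ themselves, while $\Q$ uses their inverses. Thus $\mathbf{C}^{-1}=\mathbf{D}_s^{-1}$ and $\mathbf{A}^{-1}=\mathbf{D}_\sigma^{-1}$ match exactly, and the identification goes through.

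The invertibility conditions for Woodbury are immediate. The diagonal matrices are positive definite, and $\Q$ and $\mathbf{R}$ are positive definite. Note that $\text{Cov}(\Y)$ is singular whenever $|\E|>|\V|-1$, but this causes no trouble, because the identity never requires inverting it.
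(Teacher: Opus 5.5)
Your proof is correct and follows the argument the paper intends. The paper omits this proof as analogous to Proposition~\ref{prop:Prop2}: apply the Woodbury identity to $\mathbf{R}^{-1}=(\mathbf{D}_\sigma+\B^\intercal\mathbf{D}_s\B)^{-1}$, identify the inner inverse as $\Q^{-1}=\text{Cov}(\X)$, use $\text{Cov}(\Y)=\B^\intercal\text{Cov}(\X)\B$, and rescale by $\mathbf{D}_\sigma^{1/2}$, which is exactly what you do.
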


The proof of Proposition~\ref{prop:Prop3} is analogous to that of Proposition~\ref{prop:Prop2} and is omitted.

Using the same reasoning
\begin{equation}
\label{eq:cov_woodbury88}
\frac{1}{\sigma_e^2}\text{Var$(Y_e)$} + \sigma_e^2\text{Var$(\tilde{Y}_e)$} = 1
\end{equation}
for $1 \le e \le |\E|$, which yields
\begin{IEEEeqnarray}{rCl}
\label{eq:ExactvarBoundsForY}
\text{Var$(Y_e)$} & \le & \sigma_e^2 \\
\text{Var$(\tilde{Y}_e)$} & \le & \frac{1}{\sigma_e^2}
\end{IEEEeqnarray}
and
\begin{equation}
\label{eq:uncertainlyD}
\text{Var$(Y_e)$}\text{Var$(\tilde{Y}_e)$} \le \frac{1}{4}
\end{equation}
with equality if and only if $\sigma_e^{-2}\text{Var$(Y_e)$} = \sigma_e^{2}\text{Var$(\tilde{Y}_e)$} = 1/2$. This result should be compared to (\ref{eq:uncertainly}).

Derivations of lower and upper bounds on $\text{det}(\text{Cov}(\X))$, $H(\X)$, and $H(\tilde{\Y})$ are provided in Appendix~\ref{appsec:EntropyBound}.

\section{Gibbs Sampling in the Dual Domain}
\label{sec:GibbsDual}

The random-sweep Gibbs sampler of Section~\ref{sec:GibbsP} can be employed to draw samples 
according to $g_{\tilde{\Y}}(\tilde{\y})$ in (\ref{eqn:PDFDu3}). We start from an initial vector 
$\tilde{\y}^{(0)} = (\tilde{y}_1^{(0)}, \tilde{y}_2^{(0)}, \ldots, \tilde{y}_{|\E|}^{(0)})^\intercal$. At iteration $\ell \ge 1$, the Gibbs sampler runs the following steps $|\E|$ times: 
\begin{enumerate}[i)]
\item randomly choose an index $i$ from $\{1,2, \ldots, |\E|\}$.
\item update $\tilde{y}_i^{(\ell)}$ according to $g_{\tilde{\Y}}(\,\cdot\,\cond \tilde{y}_j^{(\ell-1)}, j \ne i)$.
\end{enumerate}

We can derive the convergence rate of the sampler to a multivariate Gaussian target distribution from the results of Section~\ref{sec:ConvP}. Indeed, in the dual domain 
\begin{equation}
\label{eqn:dualrate}
r_\text{d} = \big(|\E|^{-1}(|\E|-1 + \lambda_{\text{max}}(\A)\big)^{|\E|}
\end{equation}
where
\begin{equation}
\label{eqn:AD}
\A = \I - \text{diag$(R_{11}^{-1}, R_{22}^{-1}, \ldots, R_{|\E||\E|}^{-1})$}\mathbf{R}
\end{equation}
and $\{R_{ii}\}_{i=1}^{|\E|}$ are the diagonal entries of $\mathbf{R}$ in (\ref{eqn:QPNHDual}).

\begin{proposition}
\label{prop:Prop4}
In the dual of homogeneous models with cycles, the convergence rate of the random-sweep Gibbs sampler is
\begin{equation}
\label{eqn:dualrate2}
r_\text{d} = \Big(1-\frac{\sigma^2}{\sigma^2+2s^2}\frac{1}{|\E|}\Big)^{|\E|}
\end{equation}
with
\begin{equation}
\label{eqn:dualrate3}
\lim_{|\E| \to \infty} r_\text{d} = \textrm{exp}\Big(-\frac{\sigma^2}{\sigma^2+2s^2}\Big)
\end{equation}
as the asymptotic convergence rate.
\end{proposition}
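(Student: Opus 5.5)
We follow the template of the proof of Proposition~\ref{prop:Prop1}, with $\mathbf{R}$ in place of $\Q$. The rate in (\ref{eqn:dualrate}) depends only on $\lambda_{\text{max}}(\A)$, where $\A$ is given by (\ref{eqn:AD}). So the whole task reduces to computing the diagonal of $\mathbf{R}$ and its smallest eigenvalue.

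First, in the homogeneous case $\mathbf{R} = \sigma^2\I + s^2\B^\intercal\B$ by (\ref{eqn:QD1}). Every column of the oriented incidence matrix $\B$ has exactly one $+1$ and one $-1$. Hence the diagonal entries of the edge Laplacian $\B^\intercal\B$ all equal $2$, irrespective of the degrees or the topology of $G$. Therefore $R_{ii} = \sigma^2 + 2s^2$ for every $1 \le i \le |\E|$, consistent with Examples~3 and~4. Since the diagonal is constant,
\begin{equation*}
\A = \I - \frac{1}{\sigma^2+2s^2}\mathbf{R},
\qquad
\lambda_{\text{max}}(\A) = 1 - \frac{\lambda_{\text{min}}(\mathbf{R})}{\sigma^2+2s^2}.
\end{equation*}

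The key step, and the only place where the hypothesis ``with cycles'' enters, is showing that $\lambda_{\text{min}}(\mathbf{R}) = \sigma^2$. Equivalently, we must show that $\B^\intercal\B$ is singular. Since $\B$ is $|\V|\times|\E|$ with rank $|\V|-1$, as stated in Section~\ref{sec:Model}, the matrix $\B^\intercal\B \in \R^{|\E|\times|\E|}$ also has rank $|\V|-1$. Its null space, the cycle space of $G$, therefore has dimension $|\E|-|\V|+1$. This dimension is positive precisely when $|\E| > |\V|-1$, that is, when $G$ contains a cycle. Concretely, the signed indicator vector of any cycle lies in the kernel of $\B$.

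Because $\B^\intercal\B$ is positive semidefinite and has a zero eigenvalue, $\lambda_{\text{min}}(\mathbf{R}) = \sigma^2$. This gives
\begin{equation*}
\lambda_{\text{max}}(\A) = \frac{2s^2}{\sigma^2+2s^2}.
\end{equation*}
Substituting into (\ref{eqn:dualrate}) yields $|\E|^{-1}\big(|\E|-1+\lambda_{\text{max}}(\A)\big) = 1 - \frac{\sigma^2}{\sigma^2+2s^2}\frac{1}{|\E|}$, which is (\ref{eqn:dualrate2}). The asymptotic rate (\ref{eqn:dualrate3}) then follows from $(1 - c/n)^n \to e^{-c}$ with $n = |\E| \to \infty$.

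The main obstacle is that this cycle-space rank argument is the crux. For a tree, such as the star graph, $\B^\intercal\B$ is nonsingular, and $\lambda_{\text{min}}(\mathbf{R})$ exceeds $\sigma^2$. This is why the formula is claimed only for graphs with cycles and why the star graph must be treated separately.
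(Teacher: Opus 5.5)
Your proposal is correct and follows essentially the same argument as the paper. The steps match: the constant diagonal $\sigma^2+2s^2$ of $\mathbf{R}$, the $(|\E|-|\V|+1)$-dimensional null space of $\B^\intercal\B$ giving $\lambda_{\text{min}}(\mathbf{R})=\sigma^2$ exactly when $G$ has cycles, and substitution of $\lambda_{\text{max}}(\A)=2s^2/(\sigma^2+2s^2)$ into the rate formula for $r_\text{d}$. Your derivation of $R_{ii}=\sigma^2+2s^2$ directly from the column structure of $\B$ is slightly more self-contained than the paper, which takes this diagonal value from its earlier discussion of homogeneous models.
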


\noindent
{\bf Proof.} From (\ref{eqn:QD1}), $\mathbf{R} = \sigma^2\I + s^2\B^\intercal\B$. 
The two matrices $\B^\intercal\B \in \R^{|\E|\times |\E|}$ and $\B\B^\intercal \in \R^{|\V|\times |\V|}$ share the same 
rank and the same $|\V| -1$ non-zero eigenvalues; see~Section~\ref{sec:Model}. Therefore, $\B^\intercal\B$ has $|\E| - |\V| + 1$ zero 
eigenvalues. If $|\E| > |\V| - 1$ (i.e., if $G$ has cycles), then $\lambda_{\text{min}}(\B^\intercal\B) = 0$.

Hence
\begin{equation}
\label{eqn:Rlambdamin}
\lambda_{\text{min}}(\mathbf{R}) = \sigma^2
\end{equation}
Since the diagonal entries of $\mathbf{R}$ are equal to $\sigma^2 + 2s^2$, the largest eigenvalue of $\A$ in (\ref{eqn:AD}) is
\begin{equation}
\label{eqn:primalrateHomogenD}
\lambda_{\text{max}}(\A) = \frac{2s^2}{\sigma^2+2s^2}
\end{equation}

The result follows from substituting (\ref{eqn:primalrateHomogenD}) into (\ref{eqn:dualrate}). The asymptotic convergence rate is obtained 
by considering $|\E| \to \infty$. \hfill $\blacksquare$

We emphasize that the rates in (\ref{eqn:dualrate2}) and (\ref{eqn:dualrate3}) remain valid for both 
fully-connected and balanced complete bipartite graphs. Indeed, these rates apply to all homogeneous graphs with PDF in (\ref{eqn:PDFP}) satisfying the 
condition $|\E| > |\V| - 1$, which only excludes cycle-free graphical models. As a result, under the 
framework of the models studied in this paper, the rates in the dual domain achieve \emph{universality} across all graph structures with cycles.

A comparison of the rates in (\ref{eqn:primalrate3}) and (\ref{eqn:dualrate3}) 
shows that a faster asymptotic convergence rate is achieved in the dual domain when $k > 2$. We next demonstrate that this rate can be further improved.

\subsection{The Effective Convergence Rate}

Equation \eqref{eqn:dualrate3} gives the convergence rate for the edge
vector $\tilde{\Y}$. In practice, however, the object of interest is usually not
$\tilde{\Y}$; but its image $\tilde{\X}=\B\tilde{\Y}$. 

For the oriented incidence matrix $\B \in \R^{|\V|\times |\E|}$, the null space $\mathcal{N}(\B)$ is equal to the cycle space of $G$, which has dimension
$|\E|-|\V|+1$. The row space $\mathcal{R}(\B^\intercal)$ coincides with the cut space of $G$, and has dimension $|\V|-1$. 
Indeed
\begin{equation}
\label{eqn:cyclecut}
\mathbb{R}^{|\E|} = \mathcal{N}(\B) \oplus \mathcal{R}(\B^\intercal)
\end{equation}
where $\oplus$ denotes the direct sum \cite{horn2013matrix}. Any $\tilde{\Y}$ can be written as the sum $\tilde{\Y}_{\text{cycle}} + \tilde{\Y}_{\text{cut}}$ of its cycle space and
its cut space components. Furthermore 
\begin{IEEEeqnarray}{rCl}
\tilde{\X} & = &\B(\tilde{\Y}_{\text{cycle}} + \tilde{\Y}_{\text{cut}}) \label{eq:XYB}\\
 & =  & \B\tilde{\Y}_{\text{cut}} \label{eq:XYB1}
\end{IEEEeqnarray}
which means $\tilde{\X}$ depends exclusively on the cut-space component of $\tilde{\Y}$, whereas its cycle-space component is completely annihilated.

We will show that
working directly with $\tilde{\X}$ yields a strictly faster convergence rate. The rate is determined by $\lambda_2(\Lap)$ the smallest nonzero
eigenvalue of $\Lap$  (i.e., the algebraic connectivity of $G$).

\begin{proposition}
\label{prop:Prop5}
In a homogeneous model defined on a graph $G$ with cycles, let
$\lambda_2(\Lap)$ denote the algebraic connectivity of $G$. Any statistic that
depends on $\tilde{\Y}$ only through $\tilde{\X}=\B\tilde{\Y}$ can be estimated by
the random-sweep Gibbs sampler in the dual domain at the effective rate
\begin{equation}
\label{eqn:effectiverate1}
r^{\text{eff}}_\text{d}
   = \Big(1-\frac{\sigma^2+s^2\lambda_2(\Lap)}{\sigma^2+2s^2}\frac{1}{|\E|}\Big)^{|\E|}
\end{equation}
with
\begin{equation}
\label{eq:r_eff_final}
\lim_{|\E|\to\infty} r^{\text{eff}}_\text{d}
   = \exp\!\left(-\frac{\sigma^2+s^2\lambda_2(\Lap)}{\sigma^2+2s^2}\right)
\end{equation}
as the asymptotic convergence rate.
\end{proposition}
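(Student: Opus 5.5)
The plan is to analyze the linear dynamics of the random-sweep sampler for a Gaussian target, restrict them to the cut space $\mathcal{R}(\B^\intercal)$, and then read off the effective rate from the spectrum of $\A$ on that invariant subspace.

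\emph{Step 1: the relevant operator.} For a Gaussian target, the random-scan Gibbs sampler acts linearly on the conditional mean. One single-site update, averaged over the uniformly chosen index, maps the mean (and, in $L^2$, every linear functional) by
\begin{equation*}
\mathbf{P} = |\E|^{-1}\big((|\E|-1)\I + \A\big).
\end{equation*}
Here $\A$ is as in (\ref{eqn:AD}), and since all diagonal entries $R_{ii}=\sigma^2+2s^2$ are equal,
\begin{equation*}
\A = \I - (\sigma^2+2s^2)^{-1}\mathbf{R}.
\end{equation*}
This is the structure behind (\ref{eqn:dualrate}): the rate is the $|\E|$-th power of the largest relevant eigenvalue of $\mathbf{P}$. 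I will take this characterization from \cite{amit1996convergence,roberts1997updating}. It also applies to a restricted class of functionals, namely those whose dependence lives in an invariant subspace of $\mathbf{P}$.

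\emph{Step 2: invariance of the cut space.} Because $\A$ is a polynomial in $\B^\intercal\B$, both $\mathcal{N}(\B)$ and $\mathcal{R}(\B^\intercal)$ are invariant under $\mathbf{R}$, $\A$ and $\mathbf{P}$. These subspaces are orthogonal and give the decomposition (\ref{eqn:cyclecut}). By (\ref{eq:XYB1}), any statistic of $\tilde{\X}=\B\tilde{\Y}$ depends only on $\tilde{\Y}_{\text{cut}}$. The mean of such a statistic therefore evolves by $\mathbf{P}$ restricted to $\mathcal{R}(\B^\intercal)$: the projection onto the cut space commutes with $\mathbf{P}$, so cycle-space errors never leak into $\tilde{\X}$. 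The same commutation holds for the covariance recursion, which takes the form $\mathbf{P}(\cdot)\mathbf{P}^\intercal$ plus a noise term, once it is projected by $\B$ on both sides. The eigenvalue $\lambda_{\min}(\mathbf{R})=\sigma^2$ used in Proposition~\ref{prop:Prop4} belongs to $\mathcal{N}(\B)$, so it is irrelevant here.

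\emph{Step 3: spectrum on the cut space.} On $\mathcal{R}(\B^\intercal)$ the nonzero eigenvalues of $\B^\intercal\B$ coincide with those of $\Lap=\B\B^\intercal$, the smallest being $\lambda_2(\Lap)$. Hence the smallest eigenvalue of $\mathbf{R}$ on this subspace is $\sigma^2+s^2\lambda_2(\Lap)$, and
\begin{equation*}
\lambda_{\max}(\A|_{\text{cut}}) = 1-\frac{\sigma^2+s^2\lambda_2(\Lap)}{\sigma^2+2s^2}.
\end{equation*}
Substituting this into the $\mathbf{P}$ form gives (\ref{eqn:effectiverate1}), and letting $|\E|\to\infty$ with $\lambda_2(\Lap)$ held fixed gives (\ref{eq:r_eff_final}).

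\emph{Step 4: the relevant eigenvalue is the largest one.} I must check that $\lambda_{\max}$ on the cut space, and not the most negative eigenvalue, controls the rate; this is handled by the laziness in $\mathbf{P}$, exactly as in (\ref{eqn:dualrate}). Since $\lambda_{\max}(\Lap)\le 2\,d_{\max}$, negative eigenvalues of $\A$ can occur, but the eigenvalues of $\mathbf{P}$ are of the form $1-O(1/|\E|)$ and remain positive. I would also note that $\lambda_2(\Lap)>0$ because $G$ is connected, which makes the acceleration strict.

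\emph{Main obstacle.} The hard step is Step 1 combined with Step 2. The cited rate formula concerns the full chain, so I need to justify that restricting to an invariant subspace of $\mathbf{P}$ yields the corresponding restricted rate for statistics of $\tilde{\X}$. I would first try the mean/covariance recursion of the Gaussian chain, project it with $\B$, and show that the projected recursion involves only $\mathbf{P}|_{\text{cut}}$. If this works, the rate for Gaussian functionals of $\tilde{\X}$ follows directly.
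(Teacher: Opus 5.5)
Your Steps 1, 3, 4 and the first-moment half of Step 2 follow the paper's proof. You write $\A=\frac{s^2}{\sigma^2+2s^2}(2\I-\B^\intercal\B)$, split its spectrum into $\kappa$ on $\mathcal{N}(\B)$ and $\mu_i$ on $\mathcal{R}(\B^\intercal)$, use $\tilde{\X}=\B\tilde{\Y}_{\text{cut}}$, and insert $\mu_2=1-\frac{\sigma^2+s^2\lambda_2(\Lap)}{\sigma^2+2s^2}$ into \eqref{eqn:dualrate}. For linear functionals of $\tilde{\X}$ this argument is complete, and somewhat more explicit than the paper's. Set $c=\sigma^2+2s^2$. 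The averaged one-step mean map $\mathbf{P}=\I-\frac{1}{|\E|c}\mathbf{R}$ is symmetric and positive definite, and it commutes with the orthogonal projection $\Pi_{\text{cut}}$ onto $\mathcal{R}(\B^\intercal)$. Hence $\B\mathbf{P}^m=\B\mathbf{P}^m\Pi_{\text{cut}}$ involves only the cut-space eigenvalues of $\mathbf{P}$, the largest of which is $1-\frac{\sigma^2+s^2\lambda_2(\Lap)}{|\E|c}$.

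The step that fails is your claim that the covariance recursion is $\mathbf{P}(\cdot)\mathbf{P}^\intercal$ plus noise, and therefore also commutes with $\Pi_{\text{cut}}$. Updating coordinate $i$ maps $\tilde{\y}\mapsto\mathbf{T}_i\tilde{\y}+c^{-1/2}\epsilon\,\mathbf{e}_i$ with $\mathbf{T}_i=\I-c^{-1}\mathbf{e}_i\mathbf{e}_i^\intercal\mathbf{R}$. The average of $\mathbf{T}_i\mathbf{M}\mathbf{T}_i^\intercal$ over the random index is not $\mathbf{P}\mathbf{M}\mathbf{P}^\intercal$, so the second-moment matrix actually evolves by
\begin{equation*}
\mathbf{M}\mapsto\mathbf{M}-\frac{1}{|\E|c}(\mathbf{R}\mathbf{M}+\mathbf{M}\mathbf{R})+\frac{1}{|\E|c^2}\mathrm{Diag}(\mathbf{R}\mathbf{M}\mathbf{R})+\frac{1}{|\E|c}\I,
\end{equation*}
where $\mathrm{Diag}$ keeps only the diagonal entries. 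This last term does not commute with $\Pi_{\text{cut}}$. Take a second-moment error $\mathbf{v}\mathbf{v}^\intercal$ with $\mathbf{v}\in\mathcal{N}(\B)$, which is invisible in $\tilde{\X}$. After one update it produces the error $\frac{\sigma^4}{|\E|c^2}\B\,\mathrm{diag}(v_e^2)\,\B^\intercal\neq\mathbf{0}$ in $\mathbb{E}[\tilde{\X}\tilde{\X}^\intercal]$. For example, on a cyclically oriented triangle with $\mathbf{v}=(1,1,1)^\intercal$, this error is a multiple of $\Lap$.

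The reason is that each single-site move is along $\mathbf{e}_i$, which has both cycle and cut components. The cross-talk cancels only after averaging, that is, only for first moments. Consequently the covariance of $\tilde{\X}$ at sweep $t$ can inherit the slower, $\kappa$-governed decay of the cycle block. Invariance of $\mathcal{R}(\B^\intercal)$ under $\mathbf{P}$ therefore proves the effective rate only for means and other linear functionals of $\tilde{\X}$.

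The paper's proof passes over the same point: it applies the Roberts--Sahu linear-functional decomposition and then asserts the conclusion for $\mathrm{Cov}(\X)$ and the marginal variances. To cover $\mathrm{Cov}(\tilde{\X})$, and hence $\mathrm{Cov}(\X)$ via Proposition~\ref{prop:Prop2}, you would need to analyze the second-moment operator above directly, or else restrict the claim to linear statistics.
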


\noindent
{\bf Proof.}
In homogeneous models, $R_{ee}=\sigma^2+2s^2$ for every $e \in \E$. From (\ref{eqn:QD1}), we can express $\A$ in
\eqref{eqn:AD} as
\begin{IEEEeqnarray}{rCl}
\A & = & \I - \frac{1}{\sigma^2+2s^2}(\sigma^2\I + s^2\B^\intercal\B ) \\
 & =  & \frac{s^2}{\sigma^2+2s^2}\bigl(2\I-\B^\intercal\B\bigr)
\end{IEEEeqnarray}

As $\B^\intercal\B$ and the Laplacian $\Lap=\B\B^\intercal$ share the same
nonzero eigenvalues, the spectrum of $\A$ consists of
\begin{itemize}
\setlength{\itemsep}{1.6ex}
\item $\kappa = \dfrac{2s^2}{\sigma^2+2s^2}$, with algebraic multiplicity
$|\E|-|\V|+1$, whose eigenvectors span the cycle space $\mathcal N(\B)$.
\item $\mu_i = \dfrac{s^2(2-\lambda_i(\Lap))}{\sigma^2+2s^2}$,
for $i=2,\dots,|\V|$. Here $0<\lambda_2(\Lap)\le\cdots\le\lambda_{|\V|}(\Lap)$
are the nonzero eigenvalues of $\Lap$. The corresponding eigenvectors span the cut space
$\mathcal R(\B^\intercal)$.
\end{itemize}

By spectral decomposition $\A = \mathbf P \boldsymbol\Lambda \mathbf P^\intercal$, where
$\boldsymbol\Lambda=\mathrm{diag}(\kappa,\dots,\kappa,\mu_2,\dots,\mu_{|\V|})$,
the first $|\E|-|\V|+1$ columns of $\mathbf{P}$ consist of the eigenvectors that span the cycle space, and the
remaining $|\V|-1$ columns of $\mathbf{P}$ are the eigenvectors that span the cut space.

Following \cite[Theorems~2 and 3]{roberts1997updating}, 
the spectral analysis of the Gaussian random-sweep sampler shows that its dynamics can be viewed as a mixture of autoregressive processes. The 
convergence rate of any linear functional of the state is governed by the eigenvalues of $\A$ in the eigenspaces where the functional has a nonzero 
projection. Consequently, the deviation of the state $\tilde{\Y}$
from stationarity can be decomposed in the eigenbasis of $\A$. 
The component along each eigenvector decays geometrically at a rate obtained from substituting the
corresponding eigenvalue into \eqref{eqn:dualrate}. As a result, the convergence rate of any statistic of $\tilde{\Y}$ is determined by the largest 
eigenvalue among the eigendirections on which the statistic has a nonzero projection.

From \eqref{eq:XYB1}, $\tilde{\X} = \B\tilde{\Y}_{\text{cut}}$. Therefore $\tilde{\X}$ and every statistic computed 
from $\tilde{\X}$ (e.g., $\mathrm{Cov}(\X)$ and the marginal variances)
depend on $\tilde{\Y}$ only through the eigencomponents in the cut-space. The convergence
rate of any such statistic is thus governed by the maximum eigenvalue in $\mathcal{R}(\B^\intercal)$. Since $\mu_i$ is decreasing in $\lambda_i(\Lap)$, this maximum is
attained at $\mu_2$. Indeed
\begin{equation}
\max_{\mathbf{v}\in\mathcal{R}(\B^\intercal)}\frac{\mathbf{v}^\intercal \A\mathbf{v}}{\mathbf{v}^\intercal \mathbf{v}}=\max_{i\ge2}\mu_i = \mu_2 
\end{equation}
where
\begin{equation}
\mu_2 = \frac{s^2(2-\lambda_2(\Lap))}{\sigma^2+2s^2}
\end{equation}
Substituting $\mu_2$ into \eqref{eqn:dualrate} gives \eqref{eqn:effectiverate1}, and letting $|\E|\to\infty$ yields \eqref{eq:r_eff_final}. 
\hfill $\blacksquare$

For $k$-regular graphs with $k>2$, a comparison of the asymptotic rates in
\eqref{eqn:primalrate3} and \eqref{eqn:dualrate3} further shows that
\begin{equation}
\label{eqn:RateComparison}
\lim_{|\E| \to \infty}  r^{\text{eff}}_\text{d}<\lim_{|\E|\to\infty} r_\text{d} \;<\; \lim_{|\V|\to\infty} r_\text{p}.
\end{equation}
For any finite graph $\mu_2<\kappa$, because $\lambda_2(\Lap) > 0$. Therefore, $r^{\text{eff}}_\text{d} < r_\text{d}$ holds in general. 

\begin{table*}[t!]
\centering
\small
\begin{tabular}{c|c|c}
Graph & $\lambda_2(\mathbf{L})$ & Asymptotic $r^{\text{eff}}_\text{d}$ as $s/\sigma\to\infty$ \\
\midrule
Complete graph $K_{|\V|}$
    & $|\V|$
    & $\exp(-|\V|/2)$ \\[4pt]
Balanced complete bipartite graph $K_{n,n}$ $(|\V|=2n)$
    & $n$
    & $\exp(-|\V|/4)$ \\[4pt]
$n\times n$ torus $(|\V| = n^2)$
    & $4\sin^2(\pi/n)$
    & $\exp\bigl(-2\sin^2(\pi/n)\bigr)$ \\[4pt]
Random $k$-regular graph (large $|\V|$) \cite{friedman2004proofalonssecondeigenvalue}
    & $k-2\sqrt{k-1}+o(1)$
    & $\exp\bigl(-(k-2\sqrt{k-1})/2\bigr)$ \\[4pt]
Star graph $S_{|\V|}$
    & $1$
    & $\exp(-1/2)$ \\
\end{tabular}
\caption{
$\lambda_2(\mathbf{L})$ and the asymptotic effective convergence rates for several common graph families. }
\label{tab:fiedler}
\end{table*}


According to Propositions~\ref{prop:Prop1} and~\ref{prop:Prop4}, in the primal and dual domains of homogeneous $k$-regular models, the asymptotic convergence rates 
tend to 1 as $s/\sigma \to \infty$. However, according to Proposition~\ref{prop:Prop5}, $r^{\text{eff}}_\text{d}$ asymptotically approaches $\textrm{exp}(-\lambda_2(\Lap)/2)$, which is strictly less than 1. 
The improvement is governed by the algebraic connectivity $\lambda_2(\Lap)$. Table~\ref{tab:fiedler}
lists $\lambda_2(\Lap)$ for several common graph families. 


The convergence rates for the homogeneous star graph (as a cycle-free model described in Examples 2 and 4) are derived 
in Appendix~\ref{appsec:StarG}, where we will show that the effective convergence rate matches the rate reported in Table~\ref{tab:fiedler}.


\subsection{Computational Complexity}
\label{sec:Complexity}

In Sections~\ref{sec:GibbsP} and~\ref{sec:GibbsDual}, we proved that significantly improved convergence rates can be achieved in the dual domain. However, it is important to ensure 
that these gains are not offset by a higher computational cost per iteration. 

The coordinate updates of the Gibbs sampler for a multivariate Gaussian distribution with precision matrix $\mathbf{M}$ can be expressed through the following reparameterization
\begin{IEEEeqnarray}{rCl}
    z_i^{(\ell)} &= & \mu_i^{(\ell)} + \sqrt{M_{ii}^{-1}} \epsilon^{(\ell)}, \quad \epsilon^{(\ell)} \sim \mathcal{N}(0, 1) \\
    \mu_i^{(\ell)} &= & -M_{ii}^{-1} \sum_{j \neq i} M_{ij} z_j^{(\ell-1)} \label{eqn:sampling_general}
\end{IEEEeqnarray}

In the primal domain, $\mathbf{M} = \mathbf{Q}$ as in (\ref{eqn:QPNH}), and updating each node requires a sum over all its neighbors. As the number 
of nonzero off-diagonal entries of $\mathbf{Q}$ is $2|\E|$, a full sweep will require $\mathcal{O}(|\mathcal{E}|)$ operations. 

In the dual domain, $\mathbf{M} = \mathbf{R}$ as in (\ref{eqn:QPNHDual}), which is usually dense. A naive computation of (\ref{eqn:sampling_general}) requires a sum over all the edges 
incident to the endpoints of the target edge $e = (u,v)$, which results in a cost proportional to $(d_u + d_v)$ per update and a total sweep complexity of $\mathcal{O}(\sum_{v \in \mathcal{V}} d_v^2)$. 

However, by 
maintaining a running tally of the auxiliary variables $\tilde{\mathbf{x}} = \mathbf{B}\tilde{\mathbf{y}}$, we can reduce the 
complexity to $\mathcal{O}(1)$ per coordinate (i.e., per edge) update. 

More specifically, since $R_{ee'} = (\mathbf{B}^\top \mathbf{D}_s \mathbf{B})_{ee'}$ for $e\ne e'$, we can rewrite 
the sum in (\ref{eqn:sampling_general}) as
\begin{IEEEeqnarray}{rCl}
    \sum_{e' \neq e} R_{ee'} \tilde y_{e'} 
    &=& \sum_{e'\neq e} \sum_{k\in\mathcal{V}} s_k^2 B_{ke} B_{ke'} \tilde{y}_{e'} \\
    &= & \sum_{k\in\mathcal{V}}s_k^2B_{ke}\big(\sum_{e'\neq e}B_{ke'}\tilde{y}_{e'}\big)\\
    &= & \sum_{k\in\mathcal{V}}s_k^2B_{ke}\big((\mathbf{B}\tilde{\mathbf{y}})_k - B_{ke}\tilde{y}_e\big)      
\end{IEEEeqnarray}
Thus
\begin{IEEEeqnarray}{rCl}
    \sum_{e' \neq e} R_{ee'} \tilde y_{e'} &= &\sum_{k\in\mathcal{V}} s_k^2 B_{ke} \left( \tilde{x}_k - B_{ke} \tilde{y}_e \right) \\
    & = & s_u^2 (\tilde x_u - \tilde y_e) - s_v^2 (\tilde x_v + \tilde y_e) \label{eqn:sampling_simplified}
\end{IEEEeqnarray}

where \eqref{eqn:sampling_simplified} follows from the definition of the incidence matrix $\mathbf{B}$ for the edge $e=(u,v)$; see Section~\ref{sec:Model}. 

Substituting (\ref{eqn:sampling_simplified}) into (\ref{eqn:sampling_general}) allows for the evaluation of $\tilde{\mu}_e$ in constant time, followed by the 
update $\tilde{y}_e \leftarrow \tilde{y}_e^{\text{new}}$. We can guarantee consistency among vertex variables $\tilde{\mathbf{x}}$ using incremental updates 
\begin{IEEEeqnarray}{rCl}
\tilde{x}_u & \leftarrow & \tilde{x}_u + \Delta \\
\tilde{x}_v & \leftarrow & \tilde{x}_v - \Delta
\end{IEEEeqnarray}
with $\Delta = \tilde{y}_e^{\text{new}} - \tilde{y}_e^{\text{old}}$. This state maintenance ensures that the complexity of a full sweep in 
the dual domain is $\mathcal{O}(|\mathcal{E}|)$, which matches the complexity of the primal domain.

\section{Numerical Experiments}
\label{sec:Num}
We present numerical experiments to support the theoretical
properties of the random-sweep Gibbs sampler. The experiments provide empirical verification for 
the identity (\ref{eq:cov_woodbury}) in Proposition~\ref{prop:Prop2}, for the faster convergence of the sampler in the dual domain than in the primal domain 
predicted in Propositions~\ref{prop:Prop1} and~\ref{prop:Prop4}, for the universality of the convergence rate in the dual domain implied by Proposition~\ref{prop:Prop4}, 
and for the effective convergence rate established in Proposition~\ref{prop:Prop5}. We further show that the dual Gibbs sampler retains its advantage beyond the setting covered 
by our theoretical analysis, including on heterogeneous, non-regular graphs and under alternative updating strategies. We also demonstrate its scalability on graphs too large for the 
full covariance matrix to be computed exactly and to be stored.

We compute matrix norms in terms of the Frobenius norm $\|\cdot\|_\mathrm{F}$, where
\begin{equation}
\|\A\|^2_\mathrm{F} = \operatorname{trace}(\A^\intercal \A)
\end{equation}
for $\A \in \R^{m\times n}$. For more details, see~\cite[Chapter 5]{horn2013matrix}.

\subsection{Verification of Proposition~\ref{prop:Prop2}}
\label{subsec:verification}

We empirically verify the variance-conservation identity (\ref{eq:cov_woodbury}) on a
homogeneous $8\times8$ torus with $s=1.0$ and $\sigma=0.3$, using
$10^5$ samples in the dual domain. Fig.~\ref{fig:stochastic_pythagorean}(a) shows the analytically computed
$\mathbf{D}_s^{-1/2}\operatorname{Cov}(\mathbf{X})\mathbf{D}_s^{-1/2}$ and Fig.~\ref{fig:stochastic_pythagorean}(b) shows the estimated
$\mathbf{D}_s^{1/2}\operatorname{Cov}(\tilde{\mathbf{X}})\mathbf{D}_s^{1/2}$ from
samples drawn in the dual domain. According to (\ref{eq:cov_woodbury}), their sum should be indistinguishable from the identity matrix. 

Fig.~\ref{fig:stochastic_pythagorean}(c) illustrates the residual
$\boldsymbol{\Delta} = \mathbf{D}_s^{-1/2}\operatorname{Cov}(\mathbf{X})\mathbf{D}_s^{-1/2}
+\mathbf{D}_s^{1/2}\operatorname{Cov}(\tilde{\mathbf{X}})\mathbf{D}_s^{1/2}-\mathbf{I}$,
whose entries are visibly an order of magnitude smaller than the entires in panels (a) and (b),
confirming that the sum closely matches the identity across the whole matrix, rather than merely on average.

Fig.~\ref{fig:stochastic_pythagorean}(d) shows the
distribution of the normalized Frobenius residual $\|\boldsymbol{\Delta}\|_\mathrm{F}/\sqrt{|\mathcal{V}|}$
obtained from $10^4$ bootstrap replicates. Its small mean and narrow spread
confirm that the identity holds to high accuracy. These results demonstrate that the dual sampler captures the full information content of 
the model, allowing primal statistics to be reconstructed from dual samples alone.

\begin{figure*}[t!]
\centering
\begin{subfigure}[b]{0.4\linewidth}
\centering
\includegraphics[width=\linewidth]{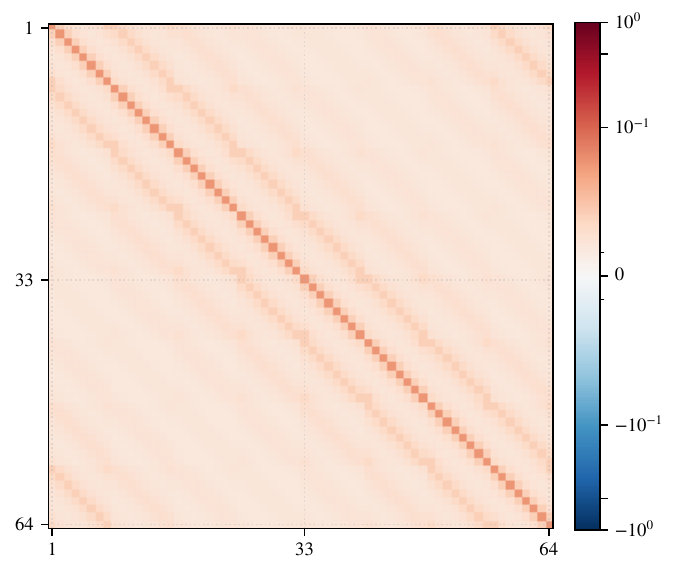}
\caption{}
\label{fig:woodbury_primal}
\end{subfigure}\hfill
\begin{subfigure}[b]{0.4\linewidth}
\centering
\includegraphics[width=\linewidth]{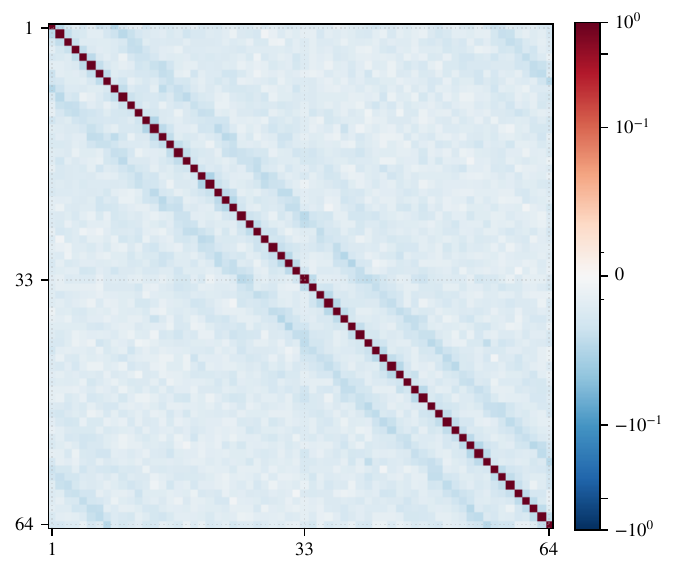}
\caption{}
\label{fig:woodbury_dual}
\end{subfigure}

\vspace{0.6em}

\begin{subfigure}[b]{0.4\linewidth}
\centering
\includegraphics[width=\linewidth]{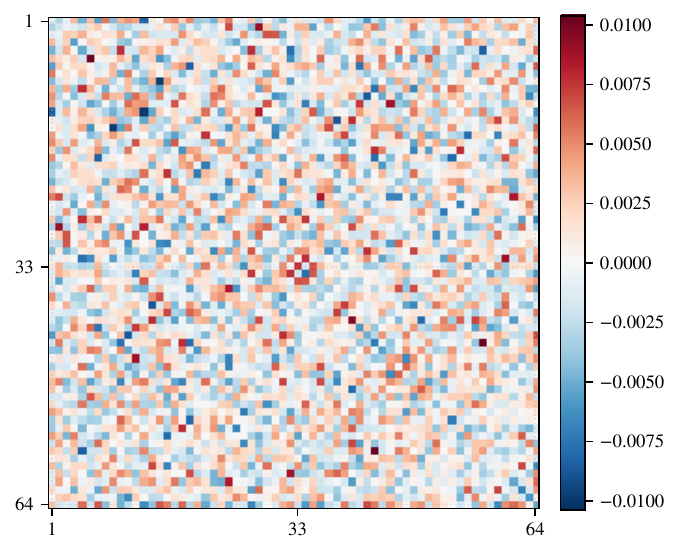}
\caption{}
\label{fig:woodbury_residual}
\end{subfigure}\hfill
\begin{subfigure}[b]{0.4\linewidth}
\centering
\includegraphics[width=\linewidth]{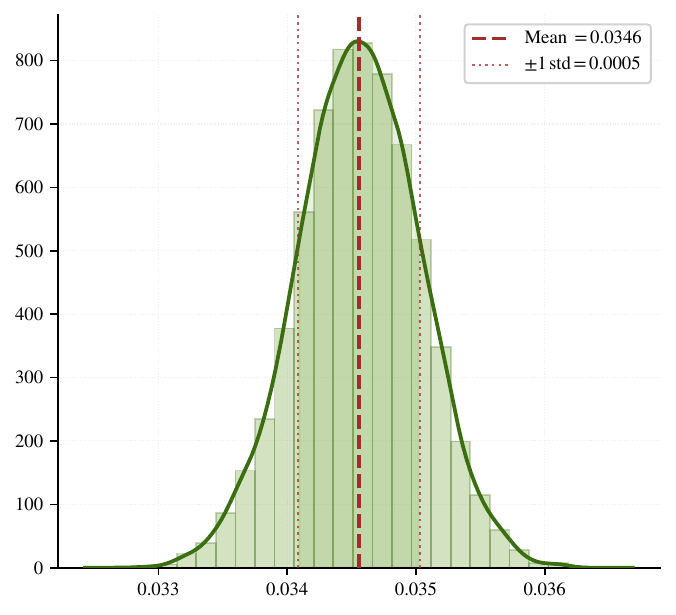}
\caption{}
\label{fig:woodbury_dist}
\end{subfigure}
\caption{Empirical verification of the identity in~(\ref{eq:cov_woodbury})
on a homogeneous $8\times8$ torus with $s=1.0$ and $\sigma=0.3$, using
$10^5$ samples in the dual domain:
(a)~Analytically computed $\mathbf{D}_s^{-1/2}\operatorname{Cov}(\mathbf{X})\mathbf{D}_s^{-1/2}$
(b)~Estimated $\mathbf{D}_s^{1/2}\operatorname{Cov}(\tilde{\mathbf{X}})\mathbf{D}_s^{1/2}$
(c)~Residual $\boldsymbol{\Delta}$ defined as the difference between the sum of (a) and (b) and the identity matrix
(d)~Error distribution of $\|\boldsymbol{\Delta}\|_\mathrm{F}/\sqrt{|\mathcal{V}|}$ obtained from bootstrapping
over $10^4$ replicates.}
\label{fig:stochastic_pythagorean}
\end{figure*}

\subsection{Convergence Analysis}
\label{subsec:convergence}

We compare convergence of the random-sweep Gibbs sampler in the primal and dual domains for
estimating the covariance matrix $\boldsymbol{\Sigma}$ (i.e., $\operatorname{Cov}(\X)$).
A naive plug-in estimator $\|\hat{\boldsymbol{\Sigma}}-\boldsymbol{\Sigma}\|_\mathrm{F}^2$ is
positively biased by the finite-sample variance of $\hat{\boldsymbol{\Sigma}}$.
As a result, the error reaches a nonzero floor, obscuring the geometric decay predicted by our theory. 
We therefore employ an unbiased estimator instead.

In each domain, we first run $L = 10^4$ independent trials. At each sweep $t$, we estimate the covariance matrix from
the values obtained from each trial. In the dual domain, we apply Proposition~\ref{prop:Prop2}
to compute an estimate of the primal covariance matrix using
$\operatorname{Cov}(\mathbf{X})=\mathbf{D}_s-\mathbf{D}_s\operatorname{Cov}(\tilde{\mathbf{X}})\mathbf{D}_s$ in (\ref{eq:cov_woodbury3}).
In this setting, we ensure that computations in both domains are scored against the same target $\boldsymbol{\Sigma}$.

Let $\boldsymbol{\Sigma}_t$ denote the true covariance of the chain's marginal law
at sweep $t$. We consider the squared Frobenius
error to $\boldsymbol{\Sigma}$, i.e., $\|\boldsymbol\Sigma_t-\boldsymbol\Sigma\|_\mathrm{F}^2$. We split the $L$ trials into two independent halves, each yielding the estimates $\hat{\boldsymbol{\Sigma}}_1$ and
$\hat{\boldsymbol{\Sigma}}_2$. We then form the unbiased estimator
\begin{equation}
\label{eq:unbiased_error}
    \hat{\mathcal{E}}
    = \operatorname{trace}\bigl(
        (\hat{\boldsymbol{\Sigma}}_1-\boldsymbol{\Sigma})^\intercal
        (\hat{\boldsymbol{\Sigma}}_2-\boldsymbol{\Sigma})
      \bigr).
\end{equation}
Since $\hat{\boldsymbol{\Sigma}}_1$ and $\hat{\boldsymbol{\Sigma}}_2$ are independent
and $\mathbb{E}[\hat{\boldsymbol{\Sigma}}_1] = \mathbb{E}[\hat{\boldsymbol{\Sigma}}_2] = \boldsymbol{\Sigma}_t$, it follows that
\begin{equation}
\mathbb{E}[\hat{\mathcal{E}}]=\|\boldsymbol{\Sigma}_t-\boldsymbol{\Sigma}\|_\mathrm{F}^2
\end{equation}
For more details, see~\cite{Owen2013, LehmannCasella1998}.

Because $\hat{\mathcal{E}}$ is a cross product  between two independent deviations
rather than a squared norm, it may take negative values. Such values merely indicate 
that the chain has reached $\boldsymbol{\Sigma}$ to within sampling error.

\begin{figure}[t!]
\centering
\includegraphics[width=0.94\linewidth]{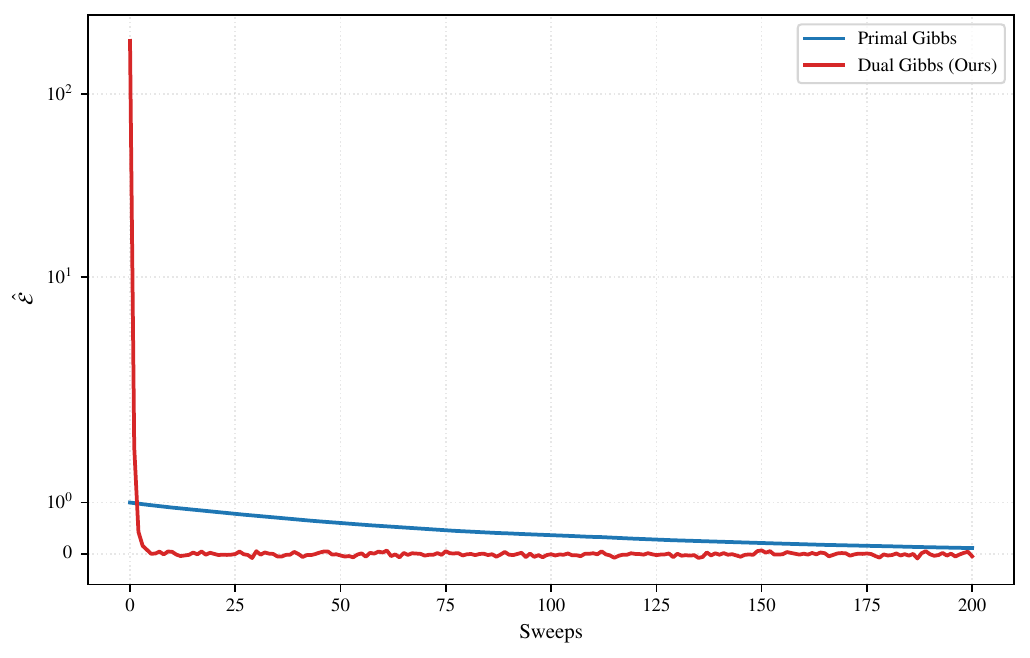}
\caption{Convergence on a homogeneous $10\times10$ torus with $s=1$ and $\sigma=0.1$, using
$L=10^4$ trials. The plot shows $\hat{\mathcal{E}}$ (normalized to
start at $1$) versus the number of sweeps.}
\label{fig:mse_convergence_sweeps}
\end{figure}

Fig.~\ref{fig:mse_convergence_sweeps} shows $\hat{\mathcal{E}}$ as a function of the number of sweeps on a homogeneous
$10\times10$ torus with $s=1$ and $\sigma=0.1$. We observe that the dual $\hat{\mathcal{E}}$
drops by orders of magnitude within a few sweeps before fluctuating around
zero, whereas the primal $\hat{\mathcal{E}}$ decays slowly throughout the entire run.

\begin{figure}[t!]
\centering
\includegraphics[width=0.94\linewidth]{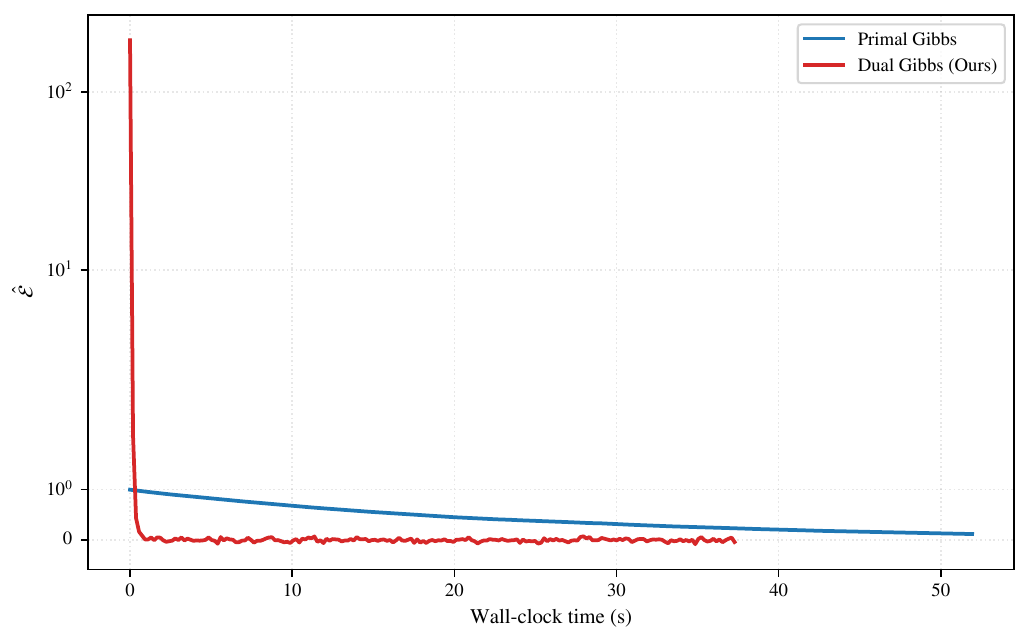}
\caption{Convergence on a homogeneous $10\times10$ torus with $s=1$ and $\sigma=0.1$, using
$L=10^4$ trials. The plot shows $\hat{\mathcal{E}}$ (normalized to
start at $1$) versus the wall-clock time in seconds.}
\label{fig:mse_convergence_time}
\end{figure}

Fig.~\ref{fig:mse_convergence_time} plots the same quantity against 
wall-clock time. Since the per-sweep cost is essentially identical in the two domains,
the number of sweeps serves as a faithful proxy for the computational cost. As shown in Section~\ref{sec:Complexity},
the computational complexity is $\mathcal{O}(|\mathcal{E}|)$ per sweep for both samplers.

Since the theory in Sections~\ref{sec:GibbsP}--\ref{sec:GibbsDual} characterizes the
rate of convergence rather than the initial error, we normalize each curve by its value at 
sweep $0$. Therefore, the comparisons isolate
the geometric decay rather than arbitrary scaling introduced by the initialization. Normalized 
errors are denoted by $\hat{\mathcal{E}}\text{(norm.)}$ in subsequent figures.


To confirm that the observed advantage is not an artifact of a particular graph realization, Fig.~\ref{fig:nonhomog_band} shows the same comparison on heterogeneous,
non-regular Watts--Strogatz (WS) graphs\footnote{A Watts--Strogatz graph is
constructed by starting from a ring lattice in which each node connects to its
$k$ nearest neighbors, and then independently rewiring each edge to a
uniformly random endpoint with probability $p$, see~\cite{watts1998collective}.}.
We generate $R=64$ independent realizations by sampling a new graph $\mathrm{WS}(|\mathcal{V}|=64,k=4,p=0.3)$ together with independent vertex and edge variances
$s_v \overset{\text{iid}}{\sim} \mathcal{U}(0.8, 1.2)$ and
$\sigma_e \overset{\text{iid}}{\sim} \mathcal{U}(0.2, 0.3)$.

Fig.~\ref{fig:nonhomog_band} reports the empirical mean of $\hat{\mathcal{E}}$ computed across realizations with its $\pm1$
standard deviation band.
The results show that the dual sampler consistently converges well ahead of the primal across the entire ensemble.

Our theoretical results are derived for the random-sweep Gibbs sampler; Fig.~\ref{fig:scan_comparison} confirms that the dual advantage persists empirically under alternative 
updating strategies, including random-permutation and deterministic sweeps, on the same $10\times10$ torus.

\begin{figure}[t!]
\centering
\includegraphics[width=0.94\linewidth]{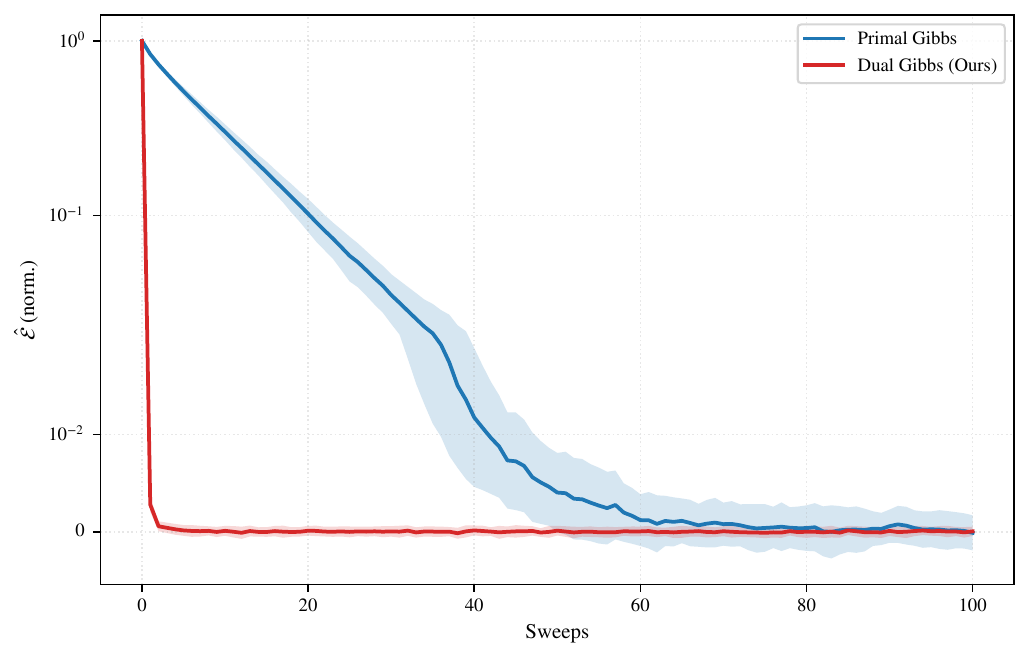}
\caption{Convergence on heterogeneous, non-regular Watts--Strogatz graphs over
$R=64$ independent realizations. The plot shows the empirical mean of $\hat{\mathcal{E}}$ across realizations within a shaded $\pm1$
standard deviation band. The dual advantage holds across the ensemble.}
\label{fig:nonhomog_band}
\end{figure}

\subsection{Robustness to Graph Structure and to Model Parameters}
\label{subsec:robustness}

We next investigate the effect of the node degree $k$ and the coupling strength $s/\sigma$ on convergence. Fig.~\ref{fig:k_robustness} shows the convergence results 
for $k$-regular homogeneous graphs with $|\mathcal{V}|=64$, $s=1.0$, and $\sigma=0.25$, for several values of $k$. 

In the primal domain (solid lines),
convergence deteriorates as $k$ increases, in agreement with Proposition~\ref{prop:Prop1}. 
The worst-case convergence rate in the dual domain given by (\ref{eqn:dualrate2}) (dashed lines) is independent of $k$, consistent with Proposition~\ref{prop:Prop4}. 
Moreover, Proposition~\ref{prop:Prop5} predicts that the effective rate (\ref{eqn:effectiverate1}), which governs $\tilde{\X}$, improves as $k$ increases because denser graphs have larger 
algebraic connectivity $\lambda_2(\mathbf{L})$. 

Since the ordering of the curves is only visible during the first few sweeps, before the dual estimator reaches the sampling-noise floor, the inset highlights this early stage.

\begin{figure}[t!]
\centering
\includegraphics[width=0.94\linewidth]{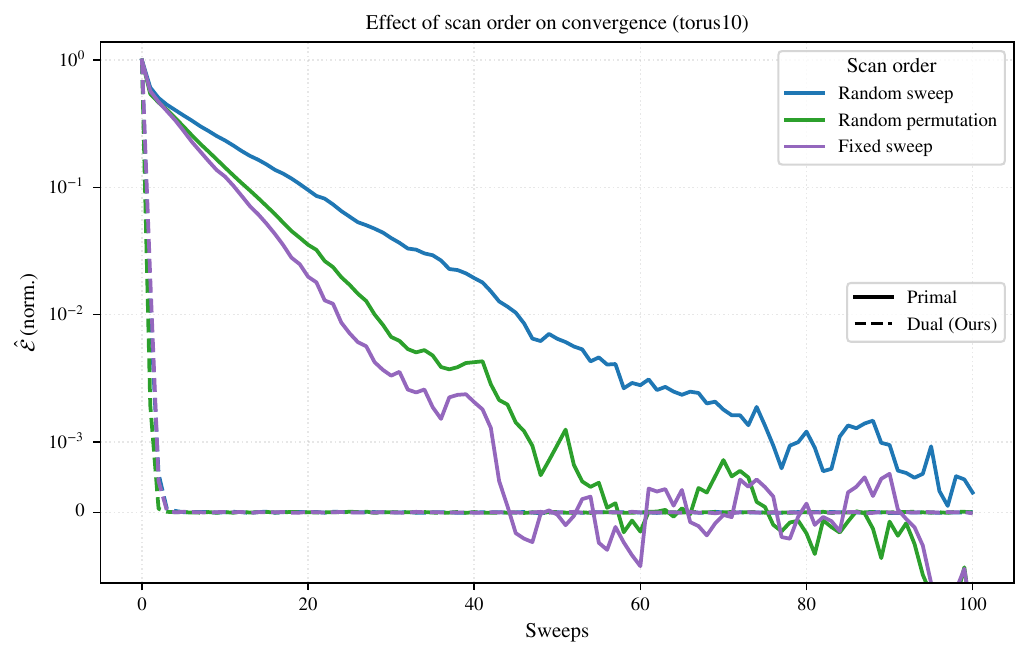}
\caption{Convergence under different updating strategies (random-sweep, random-permutation, and deterministic-sweep) on a homogeneous $10\times10$ torus with $s=1$ 
and $\sigma=0.1$. The dual advantage persists across all three 
strategies.}
\label{fig:scan_comparison}
\end{figure}

\begin{figure}[t!]
\centering
\includegraphics[width=0.94\linewidth]{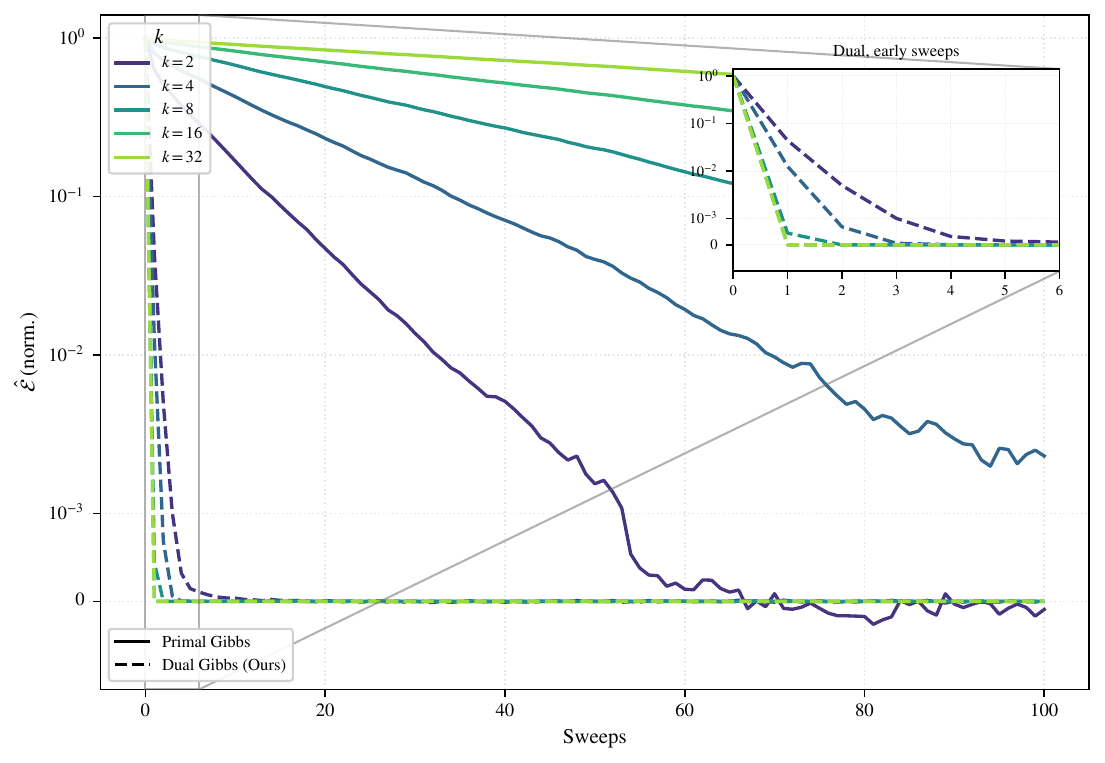}
\caption{Effect of degree $k$ on convergence for $k$-regular graphs with
$|\mathcal{V}|=64$, $s=1.0$, and $\sigma=0.25$, using the squared-error
estimator $\hat{\mathcal{E}}$. Convergence in the primal domain (solid lines)
degrades as $k$ grows. In contrast, convergence in the dual domain (dashed lines) improves
as $k$ increases. The inset shows the same dual curves over the first few sweeps, where their ordering is more visible.}
\label{fig:k_robustness}
\end{figure}

Fig.~\ref{fig:coupling_robustness} reports the negative logarithm of the convergence rate (i.e., $-\ln r$), estimated from the decay of $\hat{\mathcal{E}}$ over 
the regime in which it exhibits geometric decay as
a function of $s/\sigma$ on a logarithmic scale for the fully connected graph $K_5$.
As $s/\sigma$ grows, the primal convergence rate approaches one, causing its logarithm to collapse toward zero, which explains its slow mixing in the strong-coupling 
regime (see Proposition~\ref{prop:Prop1}). The dual effective rate approaches $\lambda_2(\mathbf{L})/2$ with $\lambda_2(\mathbf{L}) = 5$ (the red dotted plot), in quantitative agreement with
Proposition~\ref{prop:Prop5} and Table~\ref{tab:fiedler}.

\subsection{Scalability to Large Graphs}
\label{subsec:scalability}

While the previous experiments verify convergence of the full covariance matrix
$\boldsymbol{\Sigma}$, its exact computation requires the inversion of a precision matrix with computational complexity $\mathcal{O}(|\mathcal{V}|^3)$, making it computationally 
prohibitive for large graphs. To demonstrate that the dual advantage
holds at scale, we instead track the marginal variances (i.e., the diagonal entries of $\boldsymbol{\Sigma}$). This approach reduces the memory requirement 
to $\mathcal{O}(|\mathcal{V}|)$ per sweep in either domain.

\begin{figure}[t!]
\centering
\includegraphics[width=0.94\linewidth]{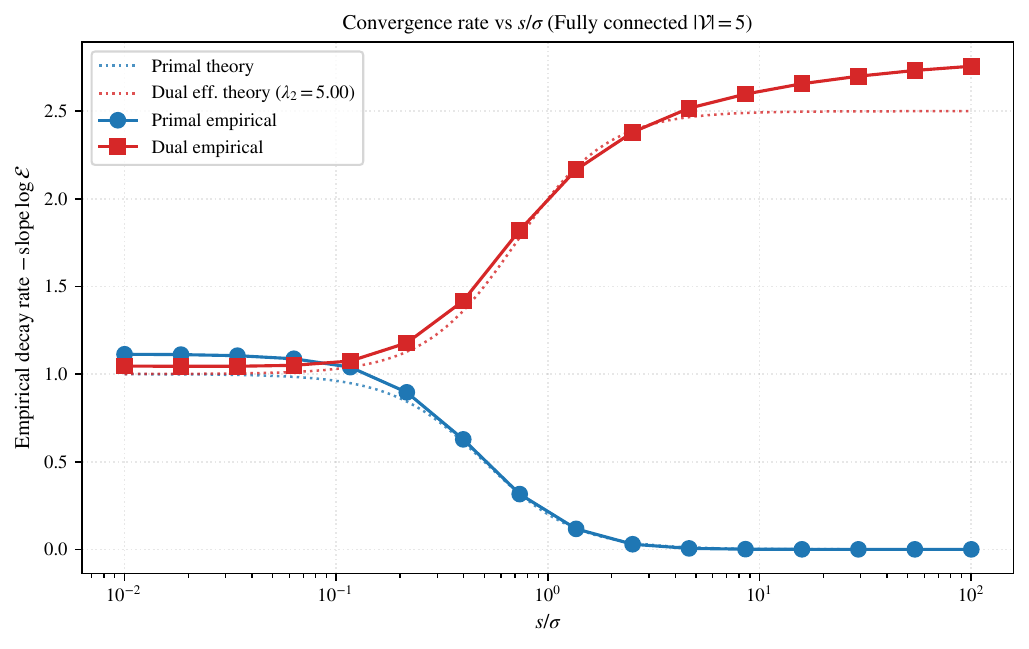}
\caption{Empirical decay rate $-\ln r$ versus the coupling strength $s/\sigma$ on a logarithmic scale for the fully connected graph $K_5$. The primal rate collapses as
$s/\sigma\to\infty$, while the dual effective rate stays bounded away from zero and
approaches $\lambda_2(\mathbf{L})/2$.}
\label{fig:coupling_robustness}
\end{figure}

By symmetry in an $n\times n$ homogeneous torus, every node shares the same
marginal variance, which admits the following closed form
\begin{equation}
\label{eq:torus_true_var}
\nu^\star = \frac{1}{n^2}\sum_{a,b=0}^{n-1}
\left(\frac{1}{s^2}+\frac{\lambda_{a,b}}{\sigma^2}\right)^{-1}
\end{equation}
with
\begin{equation}
\label{eq:torus_true_var2}
\lambda_{a,b} = 4-2\cos\!\Big(\frac{2\pi a}{n}\Big)-2\cos\!\Big(\frac{2\pi b}{n}\Big)
\end{equation}
This analytical solution, which is derived from the eigenvalues of the Laplacian matrix $\mathbf{L}$, serves as the ground truth against which
samplers in both domains are evaluated.

We consider a homogeneous $100\times100$ torus with $s=1.0$ and $\sigma=0.25$.
Across $L=600$ independent trials, we employ 
$\hat{\mathcal{E}}$ in~(\ref{eq:unbiased_error}) to estimate the marginal variances. These diagonal entries of
$\boldsymbol{\Sigma}$ are all equal to $\nu$.
While the primal sampler converges considerably slower on this large-scale model, the dual sampler reaches the analytical ground truth within a 
few sweeps. 
This confirms that the dual advantage extends beyond small graphs.

\section{Code Availability}
The implementation used in this work, together with scripts for reproducing all numerical experiments and figures, is 
available online at \url{https://github.com/Theborna/dual_gibbs}


\section{Future Work}
\label{sec:Future}

\begin{figure}[t!]
\centering
\includegraphics[width=0.94\linewidth]{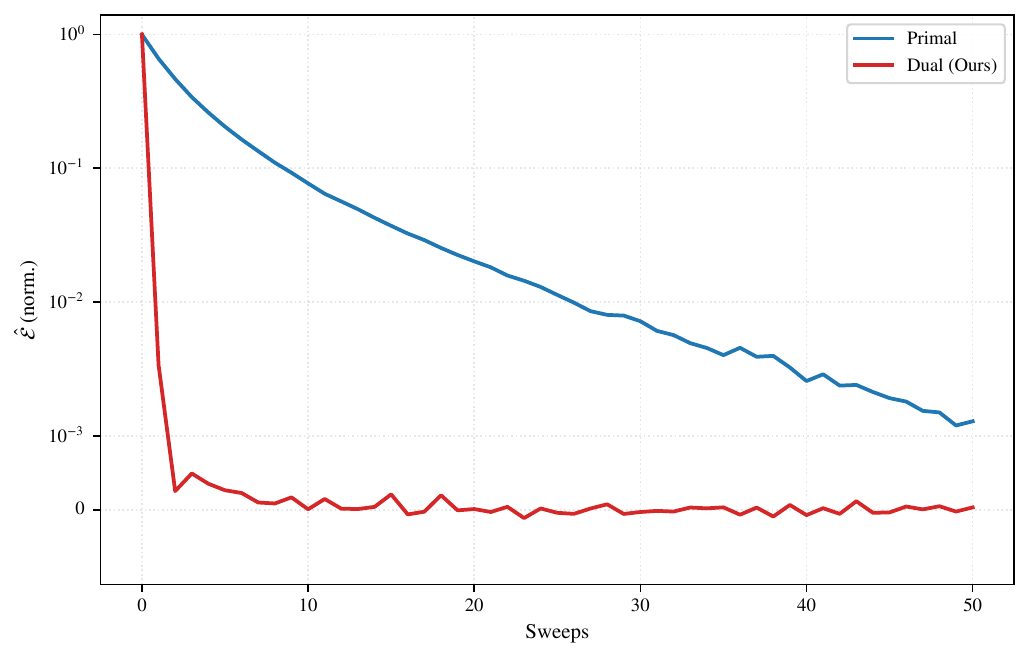}
\caption{Convergence of the estimated error in marginal variances to the analytical ground truth~(\ref{eq:torus_true_var}) on a $100\times100$
torus with $s=1.0$ and $\sigma=0.25$, computed across $L=600$ trials. The dual sampler reaches the ground truth within a few sweeps, whereas the primal sampler converges markedly slower.}
\label{fig:large_torus}
\end{figure}

There are several immediate extensions of the results presented in this paper, including:

\begin{itemize}

\item The choice of updating strategy can have a significant effect on the convergence rate of the Gibbs sampler. While this paper 
focuses on the random-sweep Gibbs sampler (except for numerical experiments in Fig.~\ref{fig:scan_comparison}), alternative updating schemes (e.g., random-permutation or deterministic-sweep) can 
also be applied to both primal and dual models. However, their convergence behavior is typically more difficult to analyze than that of random-sweep Gibbs 
sampling; see\cite{liu1995covariance}, \cite[Section 2]{roberts1997updating}. A rigorous comparison of these updating schemes is 
an interesting direction for future research.

\item As a generalization of the model in~(\ref{eqn:PDFP}), we can place bivariate Gaussian 
factors $\psi_e(\x) = \textrm{exp}(-\frac{1}{2}\x^\intercal\Q_e\x)$ on each edge. In 
the dual normal factor graph, each factor is replaced by its 2D Fourier transform, which is equal 
to $\tilde{\psi}_e(\tilde{\x}) = \textrm{exp}\big(-\frac{1}{2}\tilde{\x}^\intercal\Q_e^{-1}\tilde{\x}\big)$, up 
to scale. Extending the convergence analysis to this more general class of Gaussian graphical models seems more challenging because it 
involves additional model parameters (e.g., the correlation coefficients between neighboring variables).

\item We may extend our model in (\ref{eqn:PDFP}) to include Bayesian image analysis at the pixel level. Let $\z$ denote the observed (i.e., recorded) image, and let the
 posterior distribution of the extended model be
 \begin{multline}
 f_{\X \cond \mathbf{Z}}(\x \cond \z) \propto \\ \quad \prod_{v \in \V}\textrm{exp}\Big(-\frac{(x_v - z_v)^2}{2s^2}\Big)\!\prod_{(u,v) \in \E}\!\!\!\textrm{exp}\Big(-\frac{(x_u - x_v)^2}{2\sigma^2}\Big)
 \end{multline}
 
Markov chain Monte Carlo methods can then be used to draw samples according to the distribution. See \cite{besag1986statistical}, \cite{weiss2000correctness}, \cite[Chapter 2]{winkler2012} for
 more details.
The posterior precision matrix $\Q_{\X\cond\mathbf{Z}}=\Q+s^{-2}\I$ retains the
same structural form as~(\ref{eqn:QP2}). Therefore, the primal--dual framework of this
paper is directly applicable for posterior inference. 
In this setting, the MAP
estimate $\hat{\x} = s^{-2}\Q_{\X\cond\mathbf{Z}}^{-1}\z$ can be obtained efficiently by
exploiting the sparsity of $\Q_{\X\cond\mathbf{Z}}$ (e.g.~via the conjugate-gradient
method). The posterior uncertainty, however, requires the covariance structure or marginal variances.
The Gibbs samplers of this paper offer a complementary Monte Carlo route to estimate
these quantities in both primal and dual domains.
\end{itemize}

\section{Conclusion}

We presented a theoretical framework for analyzing the random-sweep Gibbs sampling algorithm in Gaussian graphical models with a thin-membrane prior by combining 
spectral analysis with dual normal factor graph representations.
In the primal domain, we derived the exact rates of convergence for homogeneous
$k$-regular and balanced complete bipartite graphs. In the
dual domain, we proved that, independent of the graph topology, the Gibbs sampler exhibits a universal convergence rate for all homogeneous models whose
associated graphical models contain cycles. We further demonstrated that the effective convergence rate is governed by the algebraic connectivity of the graph, yielding an additional acceleration 
in the convergence rate. Beyond the convergence analysis, we established an algebraic relation between the covariance structures of the primal and 
dual models. The algebraic relation allows us to recover the covariance structure (and therefore the marginal statistics) using samples generated directly in the dual domain.

The numerical experiments corroborate the theoretical analysis across a range of graph families and model parameters, demonstrating that the dual formulation provides substantially 
faster convergence while preserving the same $\mathcal{O}(|\E|)$ computational complexity per sweep. Our results identify dual normal factor graphs as an effective and promising
framework for accelerating the Gibbs sampling algorithm in Gaussian graphical models.

\appendices


\section{Convergence Rates for the Star Graph}
\label{appsec:StarG}

Since the star graph $S_{|\V|}$ has a tree structure, it is possible to draw independent samples according to its PDF efficiently using the sum-product algorithm. 
Nonetheless, we will derive the exact rates of convergence of the random-sweep Gibbs sampler for a homogeneous $S_{|\V|}$ as an illustrative example.

In homogeneous star graphs (as in Example 2), $\Q$ is a symmetric arrowhead matrix specified by its first row in (\ref{eqn:Star1}) and its 
diagonal in (\ref{eqn:Star2}). The entries of $\A$ in (\ref{eqn:AGen}) are given by
\begin{equation}
A_{ij}= 
\begin{cases}
\dfrac{s^2}{\sigma^2+s^2} & \text{if $i = 1$ and $1 < j \le |\V|$}\\[2mm]
\dfrac{s^2}{\sigma^2+|\E|s^2} & \text{if $j = 1$ and $1 < i \le |\V|$}\\[3mm]
0 & \text{otherwise.}
\end{cases}
\end{equation}

Let $\mathbf{v} \in \R^{|\V|}$ be an eigenvector of $\A$ and $\lambda$ be the corresponding eigenvalue. From $\A\mathbf{v} = \lambda\mathbf{v}$ we get
\begin{equation}
\frac{s^2}{\sigma^2+s^2}(v_2+v_3+\ldots+v_{|\V|}) = \lambda v_1 
\end{equation}
and for $1 < i \le |\V|$
\begin{equation}
\frac{s^2}{\sigma^2+ |\E|s^2}v_1 = \lambda v_i 
\end{equation}

Hence $v_2 = v_3 = \ldots = v_{|\V|}$. A routine calculation shows that
$\lambda(\A) \in \{0, \pm\lambda_{\text{max}}(\A)\}$, where
\begin{equation}
\lambda_{\text{max}}(\A) = \Big(\frac{|\E|}{(\sigma^2+s^2)(\sigma^2+|\E|s^2)}\Big)^{1/2}s^2
\end{equation}
Recall that $|\E| = |\V| - 1$. 

Substituting $\lambda_{\text{max}}(\A)$ in (\ref{eqn:primalrate}) gives the following asymptotic convergence rate
\begin{align}
\lim_{|\V| \to \infty} r_\text{p} & = \lim_{|\V| \to \infty} \Big(1-\frac{1}{|\V|} + \frac{s}{\sqrt{\sigma^2+s^2}}\frac{1}{|\V|}\Big)^{|\V|} \\[1mm]
                                & = \textrm{exp}\Big(\frac{s}{\sqrt{\sigma^2+s^2}} -1\Big) \label{eqn:starrateAsy}
\end{align}

In the dual of homogeneous star graphs (as in Example 4), the precision matrix $\mathbf{R}$ is 
\begin{equation}
R_{ij}= 
\begin{cases}
\sigma^2+2s^2 & \text{if $i = j$}\\[1mm]
s^2 & \text{otherwise.}
\end{cases}
\end{equation}
Thus, $\A$ in (\ref{eqn:AD}) can be expressed as
\begin{equation}
\A = \frac{s^2}{\sigma^2+2s^2}\I - \frac{s^2}{\sigma^2+2s^2}\mathbf{J}
\end{equation}
where $\mathbf{J}$ is the all-ones matrix. Since $\lambda(\mathbf{J}) \in \{0, |\E|\}$, we conclude that
\begin{equation}
\lambda_{\text{max}}(\A) = \frac{s^2}{\sigma^2+2s^2}
\end{equation}
From (\ref{eqn:dualrate}) we obtain
\begin{equation}
\label{eqn:dualrate5}
r_\text{d} = \Big(1-\frac{\sigma^2 + s^2}{\sigma^2+2s^2}\frac{1}{|\E|}\Big)^{|\E|}
\end{equation}
which gives the following asymptotic convergence rate
\begin{equation}
\label{eqn:dualrate6}
\lim_{|\E| \to \infty} r_\text{d} = \textrm{exp}\Big(-\frac{\sigma^2 + s^2}{\sigma^2+2s^2}\Big)
\end{equation}
In fact, (\ref{eqn:dualrate6}) gives a better asymptotic convergence rate than (\ref{eqn:starrateAsy}) for all finite values of 
$\sigma^2$ and $s^2$. 

The rate in~(\ref{eqn:dualrate5}) coincides with the rate derived in~\eqref{eqn:effectiverate1} for $\lambda_2(\Lap) = 1$, and 
the rate in~(\ref{eqn:dualrate6}) converges to $\text{exp}(-1/2)$ as $s \to \infty$, matching the rate reported in Table~\ref{tab:fiedler}.

Indeed, in cycle-free graphs, such as $S_{|\V|}$, the cycle space is $\mathcal N(\B)=\{\mathbf 0\}$. Therefore, $\lambda_{\text{min}}(\B^\intercal\B) > 0$ and $\B^\intercal\B$ is 
nonsingular; see Proposition~\ref{prop:Prop4}.

\section{Bounds on the Entropy Function}
\label{appsec:EntropyBound}

From Hadamard's inequality~\cite{horn2013matrix}
\begin{equation}
\label{eq:ExactdetCovUpperBounds}
\prod_{i =1}^ {|\V|}\mathbf{Q}^{-1}_{ii} \le \text{det}(\text{Cov}(\X)) \le \prod_{i =1}^ {|\V|} s_i^2 
\end{equation}

In $k$-regular homogeneous models $\mathbf{Q}_{ii} =1/s^2 + k/\sigma^2$. Hence
\begin{equation}
\Big(\frac{s^2\sigma^2}{\sigma^2 + ks^2}\Big)^{|\V|} \le \text{det}(\text{Cov}(\X)) \le s^{2|\V|}
\end{equation}

The entropy of a multivariate Gaussian distribution is given by $H(\X)  = \frac{|\V|}{2}\ln 2\pi e + \frac{1}{2}\ln\text{det}(\text{Cov}(\X))$. 
We conclude that they entropy per site is bounded by
\begin{equation}
\label{eq:ExactdetCovUpperBoundsF}
\frac{1}{2}\ln\big(2\pi e\frac{s^2\sigma^2}{\sigma^2 + ks^2}\big) \le \frac{H(\X)}{|\V|}  \le \frac{1}{2}\ln(2\pi es^2)
\end{equation}

From (\ref{eqn:YfromX}), we have $H(\Y \cond \X)  = 0$. Therefore, $H(\X) = H(\X, \Y)$ and the bounds in (\ref{eq:ExactdetCovUpperBoundsF}) hold 
for $H(\X, \Y)$ as well.

Similarly 
\begin{equation}
\big(\sigma^2 + 2s^2\big)^{-|\E|} \le \text{det}\big(\text{Cov}(\tilde{\Y})\big) \le \sigma^{-2|\E|}
\end{equation}
which gives the following bounds on the entropy per edge
\begin{equation}
\label{eq:detinequality}
\frac{1}{2}\ln\Big(\frac{2\pi e}{\sigma^2 + 2s^2}\Big) \le \frac{H(\tilde{\Y})}{|\E|}  \le \frac{1}{2}\ln \Big(\frac{2\pi e}{\sigma^2}\Big)
\end{equation}
In the dual domain, $H(\tilde{\Y}) = H(\tilde{\X},\tilde{\Y})$ as $H(\tilde{\X} \cond \tilde{\Y}) = 0$ from (\ref{eqn:XfromY}). The bounds in (\ref{eq:detinequality}) also 
hold for $H(\tilde{\X},\tilde{\Y})$.

Finally, since there are linear dependencies among the components of $\Y$ and $\tilde{\X}$, the determinants of $\text{Cov}(\Y)$ and $\text{Cov}(\tilde{\X})$ are equal to zero.

\IEEEtriggeratref{12}
\bibliography{referencefile}

\end{document}